\newcommand{\mbf}[1]{\mathbf{#1}}
\newcommand{\mbb}[1]{\mathbb{#1}}
\newcommand{\rvx}{\mathbf{x}}
\newcommand{\rvz}{\mathbf{z}}
\newtheorem{observation}{Observation}
\definecolor{Gray}{gray}{0.85}
\definecolor{LightCyan}{rgb}{0.88,1,1}
\def\@onedot{\ifx\@let@token.\else.\null\fi\xspace}
\DeclareRobustCommand\onedot{\futurelet\@let@token\@onedot}
\newcommand{\xmark}{\ding{55}}
\newcommand{\cmark}{\ding{51}}
\def\eg{\emph{e.g}\onedot}
\def\ie{\emph{i.e}\onedot}
\theoremstyle{plain}
\newtheorem{theorem}{Theorem}[section]
\newtheorem{proposition}[theorem]{Proposition}
\newtheorem{lemma}[theorem]{Lemma}
\theoremstyle{definition}
\newtheorem{definition}[theorem]{Definition}
\theoremstyle{remark}
\icmltitlerunning{ButterflyFlow: Building Invertible Layers with Butterfly Matrices}
\begin{document}

\twocolumn[
\icmltitle{ButterflyFlow: Building Invertible Layers with Butterfly Matrices}

\icmlsetsymbol{equal}{*}

\begin{icmlauthorlist}
\icmlauthor{Chenlin Meng}{equal,s}
\icmlauthor{Linqi Zhou}{equal,s}
\icmlauthor{Kristy Choi}{equal,s}
\icmlauthor{Tri Dao}{s}
\icmlauthor{Stefano Ermon}{s}
\end{icmlauthorlist}

\icmlaffiliation{s}{Computer Science Department, Stanford University}

\icmlcorrespondingauthor{Chenlin Meng}{chenlin@cs.stanford.edu}
\icmlcorrespondingauthor{Linqi Zhou}{linqizhou@stanford.edu}
\icmlcorrespondingauthor{Kristy Choi}{kechoi@cs.stanford.edu}
\icmlcorrespondingauthor{Stefano Ermon}{ermon@cs.stanford.edu}

\icmlkeywords{Machine Learning, ICML}

\vskip 0.3in
]

\printAffiliationsAndNotice{\icmlEqualContribution} %

\begin{abstract}
Normalizing flows model complex probability distributions using maps obtained by composing invertible layers. 
Special linear layers such as masked and $1 \times 1$ convolutions play a key role in existing architectures because they increase expressive power while having tractable Jacobians and inverses.
We propose a new family of invertible linear layers based on butterfly layers, which are known to theoretically capture complex linear structures including permutations and periodicity, yet can be inverted efficiently. 
This representational power is a key advantage of our approach, as such structures are common in many real-world datasets.
Based on our invertible butterfly layers, we construct a new class of normalizing flow models called ButterflyFlow.
Empirically, we demonstrate that ButterflyFlows not only achieve strong density estimation results on natural images such as MNIST, CIFAR-10, and ImageNet-32$\times$32, but also obtain significantly better log-likelihoods on structured datasets such as galaxy images and MIMIC-III patient cohorts---all while being more efficient in terms of memory and computation
than relevant baselines.

\end{abstract}

\section{Introduction}
Generative models have  achieved tremendous success in a wide range of domains, such as images \citep{brock2018large,karras2020analyzing,vahdat2020nvae,ho2020denoising,song2020score}, natural language \citep{brown2020language,chowdhery2022palm}, video \citep{kumar2019videoflow, ho2022video}, molecule synthesis \citep{kadurin2017drugan,de2018molgan,gomez2018automatic}, and speech \citep{oord2018parallel,kong2020diffwave}. 
Normalizing flows, in particular, have attracted significant attention since they allow \emph{exact} likelihood evaluation of data rather than lower-bound approximations \citep{dinh2014nice,kingma2018glow}.

To build such normalizing flows, one must design flexible families of functions that are both invertible and admit efficient computation of Jacobian determinants \citep{rezende2015variational,papamakarios2019normalizing,hoogeboom2020convolution,NEURIPS2019_b1f62fa9,finzi2019invertible,hoogeboom2019emerging,chen2019residual,ho2019flow++,grcic2021densely}. 
While the development of non-linear coupling layers fueled early progress in the field \citep{dinh2014nice,dinh2016density}, recent advances have focused on the 
effectiveness of special linear layers such as masked, $1 \times 1$, and $d \times d$ convolutions
as key architectural primitives, among others~\cite{ma2019macow,kingma2018glow,hoogeboom2019emerging,hoogeboom2020convolution}. 
In particular, most state-of-the-art flow models first preprocess the data with such linear layers while also leveraging non-linear layers for expressivity.

\looseness=-1
In this work, we draw inspiration from the literature on learning efficient, structured linear transformations
and propose a new class of invertible linear layers based on 
\emph{butterfly layers} \citep{dao2019learning}. 
Our invertible butterfly layer satisfies the usual desiderata of a normalizing flow primitive. 
However, its key distinguishing feature lies in its representational power: in spite of its efficiency, it inherits desirable properties from \citealt{dao2019learning} in that it is theoretically guaranteed to capture complex structures in data such as permutations and periodicity. 
The expressivity of invertible butterfly layers gives it an advantage over existing methods when \emph{modeling real-world datasets that exhibit such structures}.
We then construct a new family of normalizing flow models called ButterflyFlow by combining our proposed invertible butterfly layers with coupling layers \cite{dinh2016density} and a Glow-based model backbone \cite{kingma2018glow}.

Empirically, we demonstrate that ButterflyFlow is an effective  generative model, performing favorably relative to existing methods on image datasets such as MNIST, CIFAR-10, and ImageNet-32$\times$32. %
However, we highlight that ButterflyFlow shines when modeling real-world data with special underlying structures, such as periodicity and permutations. 
Our model outperforms relevant baselines on the MIMIC-III patient dataset by approximately 200\% in negative log-likelihoods per  dimension while requiring \emph{less than half the number of model parameters}.
In this way, our invertible butterfly layer serves as a powerful architectural primitive for capturing global regularities present in the data.

The contributions of our work can be summarized as:
\begin{enumerate}
    \item We introduce ButterflyFlow, a new class of flow-based generative models parameterized by butterfly matrices.
    \item We provide theoretical guarantees that ButterflyFlow can efficiently capture common types of structures, such as permutations.
    \item We show empirically that ButterflyFlow achieves strong performance on density estimation and image synthesis tasks, and is superior at modeling data with special structure (e.g. periodicity) in real-world settings relative to existing flow-based models. 
\end{enumerate}

\section{Preliminaries}
\label{prelim}
\subsection{Flow-based Generative Models}
Given a data distribution $p_X(\mbf{x})$ and a base distribution $p_Z(\mbf{z})$ (\eg, a Gaussian distribution),
a normalizing flow is an invertible transformation $f_{\theta}: \mbf{x} \in \mbb{R}^n \mapsto \mbf{z} \in \mbb{R}^n$ that 
approximates $p_X(\mbf{x})$ via the change of variables formula:
\begin{equation}
\label{eq:flow_objective}
    p_{\theta}(\mbf{x}) = p_Z(\mbf{z})|\operatorname{det} J_{f_{\theta}}(f_{\theta}^{-1}(\mbf{z}))|,
\end{equation}

where $J_{f_{\theta}}$ is the Jacobian of $f(\mbf{x})$, and $\theta$ is the set of learnable parameters.
In practice, the Jacobian determinant $\operatorname{det} J_{f_{\theta}}(f_{\theta}^{-1}(\mbf{z}))$ must be tractable to compute.
Coupled with a simple $p_Z(\rvz)$, the change of variables formula allows for the exact likelihood evaluation of a complex $p_X(\rvx)$ as well as maximum likelihood training of $f_{\theta}$.
To sample a new data point from the model, we first draw samples $\rvz \sim p_Z(\rvz)$ from the prior distribution and then push it through the inverse flow transformation: $\rvx = f_{\theta}^{-1}(\rvz)$.

Because the normalizing flow's ability to capture complex $p_X(\rvx)$ hinges on the expressivity of the transformation $f_{\theta}$,
recent works have focused on developing more flexible parameterizations of $f_{\theta}$. 
In particular, both non-linear and linear layers have demonstrated promise.
\paragraph{Non-linear coupling layers.}

Coupling layers~\cite{dinh2014nice,dinh2016density} are a powerful class of invertible non-linear layers. 
The coupling layer splits the input $\mbf{x}$ into two components: $\mbf{x}_a$ and $\mbf{x}_b$. 
Then, it applies an identity map to $\mbf{x}_a$ and transforms $\mbf{x}_b$ using a learnable affine transform (with shift and scale parameters $s_{\theta}$ and $b_{\theta}$) that depend on $\mbf{x}_a$. The output of this layer $\mbf{y}$ is obtained by concatenating these two intermediate quantities:
\begin{align}
    &\mbf{z}_a = \mbf{x}_a; \: \mbf{z}_b = \mbf{x}_b\odot s_{\theta}(\mbf{x}_a) + b_{\theta}(\mbf{x}_a) \nonumber \\
    &\mbf{y}  = \text{concat}(\mbf{z}_a, \mbf{z}_b) \nonumber
\end{align}
Due to its simplicity and efficiency, the coupling layer has become a fundamental building block for most state-of-the-art flow model architectures ~\cite{chen2020vflow,ma2019macow,ho2019flow++}. 
However, their effectiveness depends heavily on the way in which the input $\rvx$ is partitioned.
Recent works have shown that \emph{linear layers} can learn an improved partitioning scheme, thereby boosting the performance of downstream coupling layers when used together~\cite{kingma2018glow}.

\paragraph{Invertible linear layers.}
Linear layers, such as invertible $1 \times 1$ convolutions,
were designed to increase the effectiveness of coupling layers when paired together.
Specifically, they \emph{learn} a more general partitioning of the input than the naive splitting as done in conventional coupling layers~\cite{kingma2018glow}.
Given an input with channel size $c$, we denote the learnable parameter (\ie, the filter of the $1 \times 1$ convolution) as $\textbf{W}\in\mathbb{R}^{c\times c}$. 
To compute the Jacobian determinant efficiently,
\citeauthor{kingma2018glow} use LU decomposition and parameterize $\textbf{W}$ as:
\begin{equation}
    \textbf{W}=\textbf{P}\textbf{L}(\textbf{U}+\text{diag}(\textbf{s})),
\end{equation}
where $\textbf{P}$ is a pre-specified orthogonal matrix, $\textbf{L}$ is a lower triangular matrix with ones on the diagonal, $\textbf{U}$ is an upper triangular matrix with zeros on the diagonal, and $\textbf{s}$ is a $c$-dimensional vector~\cite{kingma2018glow}. 
This particular structure in the matrix decomposition allows for the Jacobian determinant to be computed in $\mathcal{O}(c)$, rather than $\mathcal{O}(c^3)$. 
Other invertible linear layers, such as the Emerging convolution and the Woodbury transformation~\cite{hoogeboom2019emerging,lu2020woodbury}, leverage similar types of matrix structures such as sparsity to improve the performance of coupling layers without sacrificing efficiency.

\subsection{Butterfly Layers for Efficient Structured Transforms}
The butterfly layer is a special family of linear layers that can be represented as a product of $K$ sparse matrices called \emph{butterfly factors}~\cite{parker1995random,dao2019learning,dao2020kaleidoscope}. 
The butterfly factor has a particular structure that requires the specification of two parameters: the level $i \in [K]$ 
and the factor dimension $D$. We assume that $D$ is a power of 2 for ease of the technical exposition.

\textbf{Level-one butterfly factor.}
A level-one $D$-dimensional butterfly factor $\textbf{B}(1,D)$ is a $D\times D$ sparse matrix.
Its only non-zero entries are the diagonals of the four $D/2\times D/2$ sub-matrices obtained by partitioning the matrix in half~\cite{dao2019learning}, as shown in the left panel in \cref{fig:butter_block}.

\textbf{Level-$i$ butterfly factor.}
More generally, a level-$i$ $D$-dimensional butterfly factor $\textbf{B}(i,D)$
is a $D\times D$ block diagonal matrix (\ie, any off-diagonal block is a zero matrix) with block size $D/2^{i-1}\times D/2^{i-1}$.
Each of the diagonal blocks is a $(D/2^{i-1})$-dimensional level-one butterfly factor.  
Therefore, a level-$i$ $D$-dimensional butterfly factor has the form:
\begin{equation*}
\resizebox{\linewidth}{!}{
   $ \textbf{B}(i,D)=
    \begin{bmatrix}
    \textbf {B}_{1}(1, D/2^{i-1}), &\textbf{0} &... &\textbf{0}\\
    \textbf{0} &\textbf {B}_{2}(1, D/2^{i-1}) &... &\textbf{0}\\
    &...\\
    \textbf{0} &\textbf{0} &... &\textbf{B}_{2^{i-1}}(1, D/2^{i-1})
    \end{bmatrix}$
    }
\end{equation*}
where $\textbf{B}_{j}(1, D/2^{i-1})$ is a level-one $D/2^{i-1}$-dimensional butterfly factor on the $j$th sub-block of $\textbf{B}(i, D)$,
and $\textbf{0}$ is the $D/2^{i-1}\times D/2^{i-1}$ zero matrix.  We provide an illustrative example for $D=16$ in \cref{fig:butter_block}.

\begin{figure}[H]
    \centering
    \includegraphics[width=\linewidth]{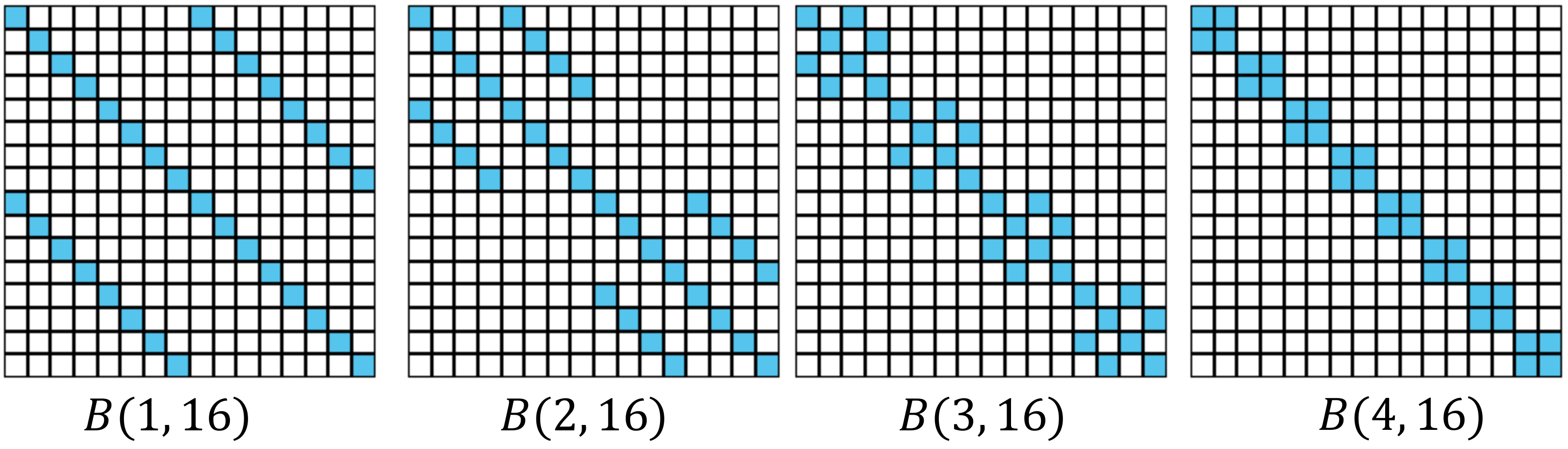}
    \caption{Butterfly factors $B(i,D)$ ($D=16$) at level $i=1, 2, 3$, and 4. White entries denote zeros and blue entries represent non-zero values. Each butterfly factor is parameterized by the non-zero values in the blue entries. 
    }
    \vspace{-5mm}
    \label{fig:butter_block}
\end{figure}

We can now construct a \emph{butterfly layer} by composing a sequence of butterfly factors as defined below.
\begin{definition} [Butterfly layer]
Given a $D$-dimensional input $\mbf{x}$, a \emph{$D$-dimensional butterfly layer} is 
 a linear layer $b:\mbf{x}\to \textbf{B}(D)\mbf{x}$, where $\textbf{B}(D)=\textbf{B}(a_1, D) \textbf{B}(a_2, D)...\textbf{B}(a_k, D)$  is a product of butterfly factors and $\{a_i\}^{k}_{i=1}$ is a sequence of integers 
such that $D \equiv 0 \pmod{2^{a_i}}$.
\end{definition}
As an example, a commonly used butterfly layer in the literature (called the butterfly matrix)~\cite{parker1995random, dao2019learning,dao2020kaleidoscope} can be constructed as $\textbf{B}(D)=\textbf{B}(1, D) \textbf{B}(2, D)...\textbf{B}(k, D)$, 
where $k$ is the largest integer 
such that $ D \equiv 0 \pmod{2^{k}}$.

\subsection{Desirable Properties of Butterfly Layers}
Although the construction of a butterfly layer involves a sequence of matrix multiplications, it can be efficiently computed on modern hardware by utilizing the sparsity of butterfly factors~\cite{dao2019learning,dao2020kaleidoscope}. 
The butterfly layers' efficiency belies their expressivity: they can represent a wide variety of structured linear maps including discrete Fourier transforms, permutations, and convolutions~\cite{dao2020kaleidoscope,dao2019learning}.
In the following section, we port over such advantages into generative models by first proposing a new family of \emph{invertible} butterfly linear layers, which serve as a useful architectural primitive for flow models.

\section{Building Invertible Butterfly Layers}
Recall that each transformation in a normalizing flow layer must be invertible and have an efficiently-computable Jacobian determinant.
We describe how flow layers comprised of butterfly factors satisfy both desiderata.

\subsection{Invertible Butterfly Factors for Normalizing Flows
}
\label{sec:Invertible Butterfly Factors}
\looseness=-1
We first demonstrate how to compute the Jacobian determinant of the butterfly factor.
Given an input $\mbf{x} \in \mbb{R}^{D}$,
we denote the parametrized level-$i$ butterfly factor as $\textbf{B}_{\theta}{(i,D)}$, where $\theta$ is the set of learnable parameters (values of the non-zero entries corresponding to the blue entries in \cref{fig:butter_block}).
Then given the linear transformation $b_i:\mbb{R}^{D}\to \mbb{R}^{D}: \mbf{x} \to \textbf{B}_{\theta}{(i,D)} \mbf{x}$, 
we can see that the Jacobian $J_{b_i}(\mbf{x})= \textbf{B}_{\theta}{(i,D)}$.
Thus, computing the Jacobian determinant of the mapping $b_i$ is equivalent to computing the Jacobian determinant of the butterfly factor $\textbf{B}_{\theta}{(i,D)}$, which can be done efficiently as in \cref{thm:jacobian}.
\begin{restatable}[]{theorem}{determinant}
\label{thm:jacobian}
The determinant of any $D$-dimensional butterfly factor can be computed in $\mathcal{O}(D)$.
\end{restatable}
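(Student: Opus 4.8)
The plan is to exploit the block structure of butterfly factors to reduce the determinant of $\textbf{B}(i,D)$ to a product of only $\mathcal{O}(D)$ scalar determinants, each of a $2\times 2$ matrix, thereby avoiding a naive $\mathcal{O}(D^3)$ Gaussian elimination.

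First I would handle the level-one case. A level-one $d$-dimensional butterfly factor has the form $\textbf{B}(1,d) = \begin{bmatrix} \textbf{D}_1 & \textbf{D}_2 \\ \textbf{D}_3 & \textbf{D}_4\end{bmatrix}$ where each $\textbf{D}_\ell$ is a diagonal $(d/2)\times(d/2)$ matrix. Since diagonal matrices commute pairwise, the standard identity for the determinant of a $2\times 2$ block matrix with commuting blocks gives $\det \textbf{B}(1,d) = \det(\textbf{D}_1\textbf{D}_4 - \textbf{D}_2\textbf{D}_3)$, and the matrix $\textbf{D}_1\textbf{D}_4 - \textbf{D}_2\textbf{D}_3$ is itself diagonal with $j$-th entry $(\textbf{D}_1)_{jj}(\textbf{D}_4)_{jj} - (\textbf{D}_2)_{jj}(\textbf{D}_3)_{jj}$, so its determinant is the product of these $d/2$ scalars. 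Equivalently, and more pictorially, there is a coordinate permutation $P$ (pairing index $j$ with index $j+d/2$) such that $P^\T \textbf{B}(1,d) P$ is block-diagonal with $d/2$ blocks of size $2\times 2$; since $\det P = \pm 1$, the determinant of $\textbf{B}(1,d)$ equals the product of those $d/2$ two-by-two determinants. Either way, $\det \textbf{B}(1,d)$ is computed with a constant number of arithmetic operations per coordinate pair, i.e.\ $\mathcal{O}(d)$ in total.

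Next I would lift this to general level $i$. By definition $\textbf{B}(i,D)$ is block-diagonal with $2^{i-1}$ diagonal blocks, each a level-one $(D/2^{i-1})$-dimensional butterfly factor, so $\det \textbf{B}(i,D) = \prod_{j=1}^{2^{i-1}} \det \textbf{B}_j(1, D/2^{i-1})$. By the level-one analysis each factor costs $\mathcal{O}(D/2^{i-1})$, and there are $2^{i-1}$ of them, so the total is $2^{i-1}\cdot \mathcal{O}(D/2^{i-1}) = \mathcal{O}(D)$; multiplying the $2^{i-1}$ resulting scalars together is cheaper still. This yields the claimed bound. There is no real obstacle here: the only point requiring care is the level-one step — verifying that a level-one butterfly factor is, up to a coordinate permutation, a direct sum of $2\times 2$ blocks, so that its determinant genuinely factorizes into $d/2$ scalar determinants. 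Once that is in place, the block-diagonal reduction and the $\mathcal{O}(D)$ operation count are routine.
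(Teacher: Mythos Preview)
Your proposal is correct and follows essentially the same approach as the paper: first reduce the level-one case via the commuting-block identity $\det\textbf{B}(1,d)=\det(\textbf{D}_1\textbf{D}_4-\textbf{D}_2\textbf{D}_3)$ to a product of $d/2$ scalars, then use the block-diagonal structure of $\textbf{B}(i,D)$ to write its determinant as a product of $2^{i-1}$ level-one determinants, summing to $\mathcal{O}(D)$ work. Your additional remark about the coordinate permutation making $\textbf{B}(1,d)$ a direct sum of $2\times 2$ blocks is a nice alternative justification but is not needed beyond what the paper does.
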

\begin{proof}[Proof sketch]
We provide the full proof in Appendix~\ref{sec:proof}. 
Since we can decompose the matrix into diagonal matrices, computing the determinant only involves operations on the diagonal elements.
\end{proof}
Next, we consider the invertibility of the butterfly layer.
When $\textbf{B}_{\theta}{(i,D)}$ is non-singular, the transformation $b_i(\mbf{x})$ is invertible. 
More formally, we define an invertible butterfly factor as the following:
\begin{definition}[Invertible butterfly factor]
 An \emph{invertible} level-$i$ $D$-dimensional butterfly factor $\textbf{B}_{\theta}{(i,D)}$ is a $D\times D$ \emph{non-singular} level-$i$ $D$-dimensional butterfly factor.
\end{definition}

Additionally, we can see that the inverse transformation of $b_i$ is $b_i^{-1}:\mbb{R}^{D}\to \mbb{R}^{D}: \mbf{x} \to \textbf{B}^{-1}_{\theta}{(i,D)} \mbf{x}$, where $\textbf{B}^{-1}_{\theta}{(i,D)}$ is the matrix inverse of $\textbf{B}_{\theta}{(i,D)}$. 
Thus computing $\mbf{x}=b_i^{-1}(\mbf{z})$ only requires the application of the following linear transformation to $\rvz$: 
\begin{equation}
    \label{eq:inverse}
    \mbf{x}=b_i^{-1}(\mbf{z})=\textbf{B}^{-1}_{\theta}{(i,D)}\mbf{z}.
\end{equation}
We note that although \cref{eq:inverse} involves a potentially expensive matrix multiplication of a $D\times D$ matrix inverse with a $D$-dimensional vector,
we can efficiently invert $\textbf{B}_{\theta}(i,\theta)$ given the following proposition.
\begin{restatable}[]{proposition}{inverse}
\label{pro:inverse}
Assuming $\textbf{B}_{\theta}{(i,D)}$ is non-singular, the matrix $\textbf{B}_{\theta}^{-1}{(i,D)}$ is a $D$-dimensional level-$i$ butterfly factor that can be computed in $\mathcal{O}(D)$. Given $\textbf{B}_{\theta}^{-1}{(i,D)}$,
 the map $b_i^{-1}: \mbf{z}\to \textbf{B}_{\theta}^{-1}{(i,D)}\mbf{z}$ can be computed in $\mathcal{O}(D)$.
\end{restatable}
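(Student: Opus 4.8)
The plan is to exploit the fact that, up to a fixed relabeling of coordinates, every butterfly factor is block diagonal with blocks of size at most $2\times 2$; both inversion and matrix--vector multiplication then decouple into $\mathcal{O}(D)$ independent constant-size operations, each of which manifestly preserves the butterfly sparsity pattern.

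First I would reduce to the level-one case. By definition $\textbf{B}_{\theta}(i,D)$ is block diagonal with $2^{i-1}$ diagonal blocks, each a level-one $(D/2^{i-1})$-dimensional butterfly factor; hence $\textbf{B}_{\theta}(i,D)$ is non-singular iff every diagonal block is, and in that case its inverse is the block diagonal matrix formed from the inverses of those blocks. So it suffices to show (i) the inverse of a non-singular level-one $m$-dimensional butterfly factor is again a level-one $m$-dimensional butterfly factor, computable in $\mathcal{O}(m)$ time. Granting this, $\textbf{B}_{\theta}^{-1}(i,D)$ is a level-$i$ butterfly factor and is computed in $\sum_{j=1}^{2^{i-1}} \mathcal{O}(D/2^{i-1}) = \mathcal{O}(D)$ time.

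For (i), write a level-one $m$-dimensional factor as $\begin{bmatrix}\textbf{D}_1 & \textbf{D}_2\\ \textbf{D}_3 & \textbf{D}_4\end{bmatrix}$ with each $\textbf{D}_\ell$ an $(m/2)\times(m/2)$ diagonal matrix, and let $\textbf{P}$ be the permutation matrix interleaving index $j\in\{1,\dots,m/2\}$ with $j+m/2$ (so $\textbf{P}^{\T}=\textbf{P}^{-1}$). Then $\textbf{P}\,\textbf{B}\,\textbf{P}^{\T}$ is block diagonal with $m/2$ blocks, the $j$th being the $2\times 2$ matrix $\begin{bmatrix}(\textbf{D}_1)_{jj} & (\textbf{D}_2)_{jj}\\ (\textbf{D}_3)_{jj} & (\textbf{D}_4)_{jj}\end{bmatrix}$. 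Thus $\textbf{B}$ is non-singular iff each of these $m/2$ determinants is nonzero, and $\textbf{B}^{-1} = \textbf{P}^{\T}(\textbf{P}\,\textbf{B}\,\textbf{P}^{\T})^{-1}\textbf{P}$ is obtained by inverting each $2\times 2$ block through the closed form $\frac{1}{ad-bc}\begin{bmatrix}d & -b\\ -c & a\end{bmatrix}$, an $\mathcal{O}(1)$ step that requires no invertibility of the sub-blocks $\textbf{D}_\ell$. Inverting each block keeps the post-permutation matrix block diagonal, so conjugating back by $\textbf{P}$ returns a matrix whose nonzeros again lie only on the four diagonals of the four $(m/2)\times(m/2)$ sub-blocks, i.e. a level-one butterfly factor; the cost is $m/2$ constant-time block inversions, hence $\mathcal{O}(m)$.

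Finally, for $b_i^{-1}:\mathbf{z}\mapsto\textbf{B}_{\theta}^{-1}(i,D)\,\mathbf{z}$: a level-one $m$-dimensional factor has $4\cdot(m/2)=2m$ nonzero entries, so a level-$i$ $D$-dimensional factor has $2^{i-1}\cdot 2(D/2^{i-1})=2D$, with at most two per row; the matrix--vector product therefore costs $\mathcal{O}(D)$. The only real subtlety is the structural claim (i)---that inversion does not fill in the sparsity pattern---and the permutation-to-block-diagonal reduction is exactly what makes this transparent, after which the complexity bookkeeping is routine. (One could instead invoke the block inversion formula directly, using that diagonal matrices commute so the relevant Schur complement stays diagonal, but that requires a case split for singular sub-blocks which the permutation argument sidesteps.)
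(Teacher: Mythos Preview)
Your proof is correct and follows essentially the same approach as the paper: reduce the level-$i$ case to the level-one case via the block-diagonal structure, show that the inverse of a level-one factor is again a level-one factor computable in $\mathcal{O}(m)$, and then use the two-nonzeros-per-row sparsity for the $\mathcal{O}(D)$ matrix--vector product. The only cosmetic difference is that for the level-one case the paper writes down the explicit element-wise formula $\textbf{B}^{-1}=\begin{bmatrix}-\textbf{D}_4/\Delta & \textbf{D}_2/\Delta\\ -\textbf{D}_3/(-\Delta) & \textbf{D}_1/(-\Delta)\end{bmatrix}$ with $\Delta=\textbf{D}_3\odot\textbf{D}_2-\textbf{D}_4\odot\textbf{D}_1$, whereas you derive the same thing by first conjugating by the interleaving permutation into $m/2$ independent $2\times2$ scalar blocks; your presentation makes the preservation of the sparsity pattern and the avoidance of any sub-block invertibility assumption more transparent, but the underlying computation is identical.
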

We provide the proof in \cref{sec:proof}.
\cref{pro:inverse} together with \cref{thm:jacobian} show that butterfly factors can be made efficiently invertible with tractable Jacobian determinants, making them suitable as building blocks for flow-based generative models.

\subsection{Invertible Butterfly Layers}
\label{sec:invertible_butter_layer}
With our invertible butterfly factors in place, we introduce a way to compose them into a more powerful \emph{invertible butterfly layer}. 
We make this precise in the following definition.
\begin{definition}[Invertible butterfly layer]
An \emph{invertible} butterfly layer $b$ is defined as
\begin{equation}
\label{eq:butter_layer}
    b=b_{a_1}\circ b_{a_2}\circ ... \circ b_{a_k}, 
\end{equation}
where  $b_{a_i}:\mbf{x}\to \textbf{B}_{\theta}(a_i,D)\mbf{x}$ are  \emph{invertible} butterfly factors and $\{a_i\}^{k}_{i=1}$ is a sequence of integers such that 
$D \equiv 0 \pmod{2^{a_i}}$.
\label{def:inv_butter_layer}
\end{definition}

Definition~\ref{def:inv_butter_layer} suggests that by virtue of being a composition of invertible butterfly factors, the invertible butterfly layer $b$ inherits some of their nice properties.
Specifically, let us consider the Jacobian determinant of $b$
in \cref{eq:butter_layer}.
Using the chain rule:
\begin{equation}
    \log |\text{det}J_{b}(\mbf{x})|=\sum_{i=1}^{k}\log|\text{det}J_{b_{a_i}}(\mbf{x})|.
\end{equation}
Since each invertible butterfly factor $b_{a_i}$ can be efficiently inverted with a Jacobian determinant that can be computed in $\mathcal{O}(D)$, their composition $b$ is also efficiently invertible with a Jacobian determinant that can be computed in $\mathcal{O}(kD)$. 

\looseness=-1
In addition to their efficiency and ease of invertibility, invertible butterfly layers largely
retain the expressiveness of the original butterfly layers \citep{dao2019learning,dao2020kaleidoscope}.
As a concrete example, 
they can represent any permutation matrix. 

\begin{restatable}[]{proposition}{permutation}
\label{pro:permutation}
Any  $D\times D$ permutation matrix (with $D=2^{k}$ a power of 2) can be represented by an invertible butterfly layer.
\end{restatable}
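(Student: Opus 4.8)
The plan is to show that any $D \times D$ permutation with $D = 2^k$ can be realized as a product of butterfly factors, leveraging the recursive block structure of the butterfly construction. The key observation is that a butterfly layer built from a full sequence of factors $\mathbf{B}(1,D)\mathbf{B}(2,D)\cdots\mathbf{B}(k,D)$ — or one in an appropriate order — has a routing structure isomorphic to a Beneš network. It is classical that a Beneš network of $2k$ stages (or equivalently $2k-1$, depending on the exact formulation) can route any permutation of $2^k$ inputs to $2^k$ outputs with each $2\times 2$ switch set either to ``pass'' or ``swap.'' Since each butterfly factor, when restricted to a permutation (i.e., each $2\times 2$ block being either the identity $\bigl[\begin{smallmatrix}1&0\\0&1\end{smallmatrix}\bigr]$ or the swap $\bigl[\begin{smallmatrix}0&1\\1&0\end{smallmatrix}\bigr]$), is exactly a layer of such switches, a butterfly layer that concatenates both the ``increasing'' sequence $\mathbf{B}(1,D),\dots,\mathbf{B}(k,D)$ and the ``decreasing'' sequence $\mathbf{B}(k,D),\dots,\mathbf{B}(1,D)$ gives precisely a Beneš network.

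First I would set up the correspondence between a level-$i$ butterfly factor and a stage of $2\times 2$ blocks: by the block-diagonal definition in the excerpt, $\mathbf{B}(i,D)$ decomposes into $2^{i-1}$ level-one factors of dimension $D/2^{i-1}$, and each level-one factor is a matrix whose only nonzero entries are on the diagonals of its four quadrants — equivalently, up to a fixed index permutation, a block-diagonal matrix of $2\times 2$ blocks. I would make this ``equivalently'' precise by exhibiting the fixed conjugating permutation (a perfect shuffle / bit-reversal-type map) that turns each level-one factor into a literal block-diagonal-of-$2\times 2$-blocks matrix, and verify that these conjugating permutations compose correctly across levels so that the whole product is conjugate to a standard Beneš network. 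Next, I would invoke (or give a short inductive proof of) the Beneš routing theorem: any permutation on $2^k$ points factors through the network, which is itself built by recursion — split the permutation using the outer switches into two sub-permutations on $2^{k-1}$ points each, route those recursively through the inner Beneš sub-networks, then fix up with the outer output switches. The induction base $k=1$ is a single $2\times 2$ switch handling the two permutations of two elements. Finally, I would conclude that since all conjugating permutations can themselves be absorbed (a conjugating permutation that is itself a fixed butterfly-representable permutation just adds a constant number of extra factors), the target permutation is a product of $O(k)$ butterfly factors, each with $2\times 2$ blocks equal to identity or swap, hence non-singular; by \cref{def:inv_butter_layer} this is an invertible butterfly layer.

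The main obstacle I anticipate is bookkeeping the index permutations: the butterfly factors as defined in the excerpt act on the ``interleaved'' coordinate layout (diagonals of quadrants), whereas the clean recursive Beneš argument is most naturally stated on a ``contiguous-halves'' layout, and one must be careful that the reshuffling needed to pass between these layouts at each level is consistent and does not itself require non-butterfly permutations. I would handle this by choosing a single global coordinate system (e.g., indexing coordinates by their $k$-bit binary expansion and having $\mathbf{B}(i,D)$ act by mixing along bit position $i$), under which each level-$i$ factor is manifestly a disjoint union of $2\times 2$ switches indexed by the remaining $k-1$ bits, and under which the Beneš recursion corresponds to peeling off bits one at a time. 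This makes the switch-setting argument transparent and sidesteps the need to track auxiliary permutations at all. A secondary, minor point is to confirm that the specific ordering of factors allowed by \cref{def:inv_butter_layer} (arbitrary sequence $\{a_i\}$ with $D \equiv 0 \pmod{2^{a_i}}$) permits the Beneš ordering $1,2,\dots,k,k,\dots,2,1$ (or whichever ordering the routing proof uses), which it does since every $a_i \le k$ and $D = 2^k$.
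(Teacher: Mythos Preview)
Your proposal is correct and rests on the same underlying idea as the paper: the Bene\v{s}-network factorization of an arbitrary permutation into a palindrome of butterfly factors at levels $1,2,\dots,k,k,\dots,2,1$. The paper's proof, however, is far shorter: it simply invokes Theorem~2 of \citealt{dao2020kaleidoscope} for the existence of such a factorization $\mathbf{P}=b_1\circ\cdots\circ b_k\circ \hat b_k\circ\cdots\circ \hat b_1$, and then observes that because $\mathbf{P}$ is non-singular, every factor in the product must itself be non-singular, so the resulting butterfly layer is invertible in the sense of \cref{def:inv_butter_layer}. What you have sketched is essentially a self-contained proof of the cited Kaleidoscope result (via the classical Bene\v{s} routing theorem and the bit-indexed coordinate system), together with the stronger observation that each factor can be taken to be a disjoint union of $2\times 2$ identity/swap blocks and is therefore \emph{manifestly} invertible --- this avoids the paper's ``product invertible $\Rightarrow$ factors invertible'' step. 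Both routes are valid; yours is more elementary and more explicit about the switch structure, while the paper's is a two-line reduction to prior work.
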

The proof of \cref{pro:permutation} follows \citeauthor{dao2020kaleidoscope}, and we provide more details in \cref{sec:proof}. 
\cref{pro:permutation} shows that invertible butterfly layers can also act as \emph{learnable} permutation layers.
This is especially helpful for adding expressivity when our butterfly layers are paired with nonlinear coupling layers that use a fixed partitioning of the input.

\subsection{Block-wise Invertible Butterfly Layers}
\label{channel_wise}

We also introduce a new variant of our invertible butterfly factor called the block-wise butterfly factor.
Specifically, given a $D$-dimensional input $\mbf{x}$, we partition its entries into $D/C$ groups where each group has $C$ elements (see \cref{fig:3d_butter_block}). 
We assume that $C$ divides $D$ for simplicity.

\begin{definition}[Block-wise invertible butterfly factor]
A level-$i$, block-size-$C$, $D$-dimensional \emph{block-wise invertible butterfly factor} $\textbf{B}_{\theta}(i,D,C)$ is a $D\times D$ non-singular block matrix with block size $C\times C$ 
such that
for any $j,\hat{j} \in\{1,...,C\}$, 
the $D/C \times D/C$ sub-matrix of $\textbf{B}_{\theta}(i,D,C)$ obtained by selecting the $C\cdot l+j$-th rows and the $C\cdot l+\hat j$-th columns  for $l \in \{0,...,D/C-1\}$,
is a level-$i$ $D/C$-dimensional butterfly factor. 
\end{definition}
\looseness=-1
Intuitively, the block-wise butterfly factor is a $D\times D$ block matrix whose $C\times C$ blocks satisfy the sparsity pattern of a $\textbf{B}_{\theta}(i,D/C)$ butterfly layer.
We provide an illustrative example in \cref{fig:channel_butter_block}.
Unlike the naïve butterfly factor where only two entries per row are allowed to be non-zero (see \cref{fig:butter_block}), this modification allows for at most $2C$ non-zero entries per row.

\begin{figure}[h]
    \centering
    \includegraphics[width=\linewidth]{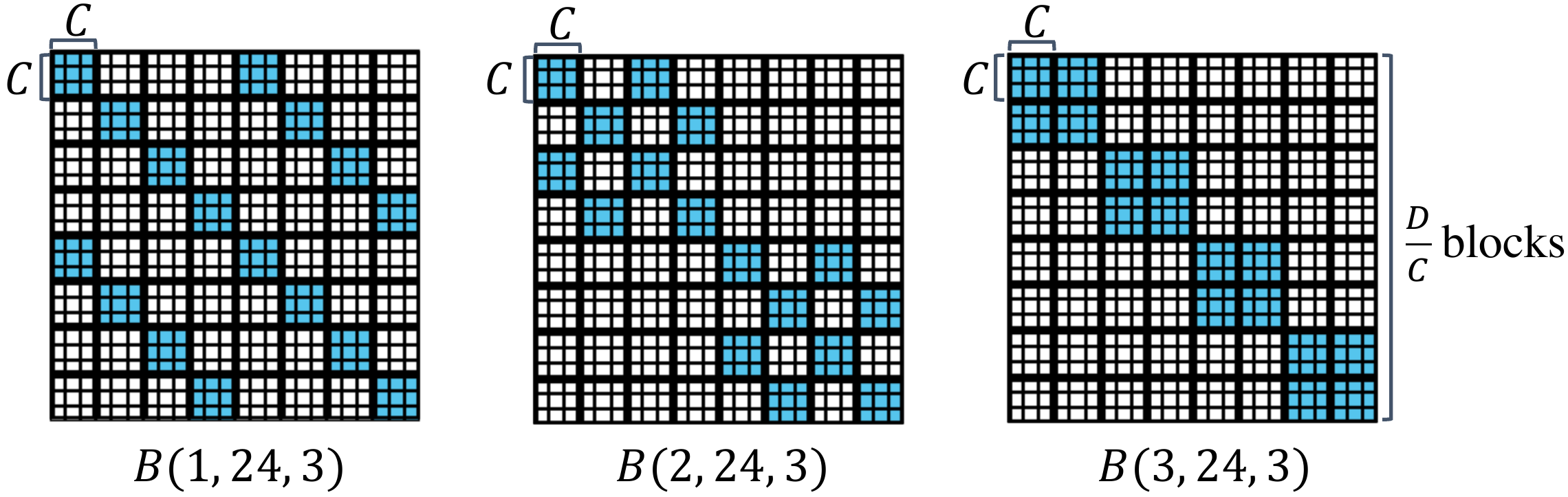}
    \caption{Block-wise invertible butterfly factors $B(i,D,C)$ ($D=24, C=3$) at levels $i=1, 2, 3$. White entries
    denote zeros and blue entries denote non-zeros. A block-wise invertible butterfly factor can have $2C$ non-zero entries per row and is more expressive than the naïve butterfly factor (\cref{sec:invertible_butter_layer}), which can only have $2$ non-zero entries per row.
    }
    \label{fig:channel_butter_block}
\end{figure}

\paragraph{Constructing block-wise invertible layers.}
Similar to how an invertible butterfly layer is constructed using invertible butterfly factors, a block-wise invertible butterfly \emph{layer} is constructed by composing a series of block-wise invertible butterfly \emph{factors}.
The block-wise invertible butterfly layer not only improves the flexibility of our invertible butterfly layers, but also reveals interesting connections to previous methods as in the following observations. 

\begin{observation}
\label{observation:butterfly}
When $C=1$, the block-wise invertible butterfly layer reduces to the invertible butterfly layer discussed in \cref{sec:Invertible Butterfly Factors}. 
\end{observation}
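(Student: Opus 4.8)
\textbf{Proof proposal for Observation~\ref{observation:butterfly}.}

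The plan is to show that when $C=1$, the defining condition of a block-wise invertible butterfly factor $\textbf{B}_{\theta}(i,D,1)$ becomes exactly the defining condition of an ordinary invertible butterfly factor $\textbf{B}_{\theta}(i,D)$, and hence the two notions of layer coincide. I would start by unwinding the block-wise definition with the substitution $C=1$: the input is partitioned into $D/C = D$ groups of $C=1$ element each, and the matrix is a $D\times D$ ``block matrix'' with $1\times 1$ blocks, which is just an arbitrary $D\times D$ matrix. The index set $\{1,\dots,C\} = \{1\}$ collapses to a single choice $j=\hat{j}=1$, so the condition ``for any $j,\hat{j}\in\{1,\dots,C\}$, the sub-matrix obtained by selecting rows $C\cdot l + j$ and columns $C\cdot l + \hat{j}$ for $l\in\{0,\dots,D/C-1\}$ is a level-$i$ $D/C$-dimensional butterfly factor'' becomes the single statement that the sub-matrix indexed by rows $\{1\cdot l + 1 : l\in\{0,\dots,D-1\}\} = \{1,\dots,D\}$ and columns $\{1,\dots,D\}$ — i.e., the whole matrix — is a level-$i$ $D$-dimensional butterfly factor. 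Combined with the non-singularity requirement already in the block-wise definition, this is precisely the definition of an invertible level-$i$ $D$-dimensional butterfly factor given in \cref{sec:Invertible Butterfly Factors}.

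Having matched the factors, I would then lift the equivalence to layers. A block-wise invertible butterfly layer is a composition of block-wise invertible butterfly factors, and an invertible butterfly layer (Definition~\ref{def:inv_butter_layer}) is a composition of invertible butterfly factors; since the two classes of factors coincide when $C=1$, and both layer constructions compose the same index sequences $\{a_i\}_{i=1}^k$ subject to the same divisibility constraint $D\equiv 0 \pmod{2^{a_i}}$, the resulting families of maps are identical. I would also note that the ``at most $2C$ non-zero entries per row'' bound specializes to the ``at most $2$ non-zero entries per row'' property of the naïve butterfly factor, which is a consistency check rather than a separate step.

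There is essentially no obstacle here: the observation is a definitional specialization, and the only care needed is bookkeeping on the index shifts (writing $C\cdot l + j$ with $C=1$, $j=1$ and confirming that $l$ ranging over $\{0,\dots,D-1\}$ enumerates all $D$ coordinates exactly once). The main thing to get right in the write-up is to be explicit that selecting all rows and all columns yields the full matrix, so that the block-wise sparsity constraint on $1\times 1$ blocks degenerates to a constraint on the entire $D\times D$ matrix rather than on a proper sub-matrix. I would keep the proof to a short paragraph, since anything longer would be padding.
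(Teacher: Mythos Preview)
Your proposal is correct. The paper states Observation~\ref{observation:butterfly} without any proof, treating it as self-evident from the definitions; your write-up simply makes this explicit by unwinding the block-wise definition at $C=1$ and checking that the single remaining sub-matrix constraint is the whole matrix, which is exactly the right (and only) thing to do.
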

In fact, the block-wise butterfly layer generalizes commonly used invertible linear layers such as the 1x1 convolution~\cite{kingma2018glow}. 

\begin{observation}
\label{pro:1x1}
When $C$ is set to the input's channel size, the block-wise invertible butterfly layer recovers the invertible 1x1 convolution by setting non-diagonal blocks to be zero and using tied weights for diagonal blocks. This is the byproduct of grouping the input entries by channels. 
\end{observation}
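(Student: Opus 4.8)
\textbf{Proof proposal for Observation~\ref{pro:1x1}.}

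The plan is to directly exhibit, for the special case $C=c$ (the channel size), a block-wise invertible butterfly \emph{layer} whose action on the $D$-dimensional input coincides with the invertible $1\times 1$ convolution with filter $\textbf{W}\in\mathbb{R}^{c\times c}$. The key observation is structural: when $C=c$, the input is partitioned into $D/c$ groups, one per spatial location, and each group contains exactly the $c$ channel values at that location. A $1\times 1$ convolution acts identically and independently at every spatial location, so its matrix representation (in the ordering where entries are grouped by spatial location) is block-diagonal with $D/c$ identical $c\times c$ blocks, each equal to $\textbf{W}$.

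First I would spell out the indexing convention so that ``grouping the input entries by channels'' makes the block-wise butterfly definition line up with this block-diagonal form. Concretely, in the block-wise butterfly factor $\textbf{B}_\theta(i,D,C)$ the $C\times C$ blocks are required to follow the sparsity pattern of a $D/C$-dimensional level-$i$ butterfly factor; if we pick $i$ so that the underlying $D/C$-dimensional butterfly pattern is the \emph{identity} pattern (i.e.\ only the $D/C$ diagonal $C\times C$ blocks are nonzero and all off-diagonal blocks are the zero matrix), then the block-wise factor is exactly block-diagonal with $D/C$ free $c\times c$ blocks. Setting every one of these diagonal blocks equal to $\textbf{W}$ — the ``tied weights'' in the statement — recovers precisely the $1\times 1$ convolution matrix. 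Since $\textbf{W}$ is parameterized to be non-singular (via the LU form in \citeauthor{kingma2018glow}), this block-diagonal matrix is non-singular, so it is a legitimate \emph{invertible} block-wise butterfly factor, and a single such factor already constitutes a block-wise invertible butterfly layer in the sense of the construction in \cref{channel_wise}.

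Then I would close the loop by checking the two flow desiderata agree: the Jacobian of the block-wise layer is this block-diagonal matrix, whose log-determinant is $(D/c)\log|\det\textbf{W}|$, matching the $1\times 1$ convolution's contribution summed over the $D/c$ spatial locations; and its inverse is block-diagonal with blocks $\textbf{W}^{-1}$, matching the inverse $1\times 1$ convolution. The main subtlety — and the only place that needs care rather than routine verification — is the bookkeeping around row/column orderings: the block-wise butterfly factor is \emph{defined} by a striding pattern (selecting rows $C\cdot l+j$), so I must confirm that the ``group by channel'' reordering of the flattened tensor is exactly the permutation that turns the strided pattern into a genuine block-diagonal matrix, and that no residual permutation is left over. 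Once that identification is pinned down, the rest is immediate.
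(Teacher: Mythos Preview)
Your proposal is essentially correct, and in fact more detailed than the paper's own treatment: Observation~\ref{pro:1x1} is stated in the main text without any proof, the authors regarding it as immediate from the definitions. So there is no ``paper's approach'' to compare against beyond the bare statement itself.

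One small imprecision worth fixing: your phrase ``pick $i$ so that the underlying $D/C$-dimensional butterfly pattern is the identity pattern'' does not quite work, because for every admissible level $i$ a $D/C$-dimensional butterfly factor has \emph{two} allowed nonzero entries per row, never only the diagonal. The right reading --- which is exactly what the observation's wording ``setting non-diagonal blocks to be zero'' means --- is that for \emph{any} fixed level $i$ you set the off-diagonal-block \emph{parameters} to zero within the allowed sparsity pattern; a diagonal matrix is a perfectly valid (degenerate) level-$i$ butterfly factor since the definition only constrains which entries \emph{may} be nonzero. With that correction the remainder of your argument (tied diagonal blocks equal to $\textbf{W}$, log-determinant $(D/c)\log|\det\textbf{W}|$, block-diagonal inverse with blocks $\textbf{W}^{-1}$, and the ordering bookkeeping) is sound and more than sufficient for what the paper asserts.
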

Additionally, the following observation shows that allowing the weights of the block-wise invertible butterfly layers to be complex numbers confers a significant boost to their representational power.

\begin{restatable}[]{observation}{observationconv}
\label{pro:emerging}
The block-wise invertible butterfly layer with weights in $\mathbb{C}$ can be used to represent a subset of the invertible $d\times d$ convolution layer.
\end{restatable}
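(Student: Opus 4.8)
The plan is to exploit the well-known fact that a $d\times d$ convolution (with circular boundary conditions) is diagonalized by the discrete Fourier transform (DFT), and then invoke the expressivity of butterfly layers for the DFT. Concretely, recall that a 1D circular convolution with a length-$d$ filter acting on an input of length $N$ (with $N$ a power of 2, and the filter zero-padded to length $N$) can be written as $\mathbf{F}^{-1}\,\mathrm{diag}(\widehat{w})\,\mathbf{F}$, where $\mathbf{F}$ is the $N\times N$ DFT matrix and $\widehat{w}=\mathbf{F}w$ is the DFT of the (padded) filter. So the first step is to reduce the claim to showing that (i) the DFT matrix $\mathbf{F}$ and its inverse are realizable as (complex-weight) butterfly layers, and (ii) a diagonal matrix is trivially a level-$i$ butterfly factor (indeed a block-wise butterfly factor with $C=1$) for any level, so $\mathrm{diag}(\widehat{w})$ fits in our framework.

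For step (i), I would cite the Cooley–Tukey FFT factorization: the $N\times N$ DFT matrix factors exactly as a product of $\log_2 N$ sparse butterfly factors together with a bit-reversal permutation, and by \cref{pro:permutation} the bit-reversal permutation is itself an invertible butterfly layer; this is precisely the statement established in \citet{dao2019learning,dao2020kaleidoscope}, so I can import it. Hence $\mathbf{F}$, $\mathbf{F}^{-1}$, and $\mathrm{diag}(\widehat{w})$ are each invertible butterfly layers over $\mathbb{C}$, and their composition $\mathbf{F}^{-1}\mathrm{diag}(\widehat{w})\mathbf{F}$ is again one (concatenation of the factor sequences), which realizes an arbitrary circular 1D convolution. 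For a genuine $d\times d$ (2D) convolution, I would either (a) handle it channel-/axis-wise by noting the 2D DFT is a Kronecker product $\mathbf{F}_N\otimes\mathbf{F}_N$ and that Kronecker products of butterfly layers are butterfly layers (after suitable interleaving permutations, again available via \cref{pro:permutation}), or, more in the spirit of the paper, (b) observe that the block-wise variant with block size $C$ lets one carry the "second axis" inside the $C\times C$ blocks: choosing $C$ to be the spatial extent along one dimension, the block-wise butterfly factor with complex entries can encode the per-block DFT conjugation, so the block-wise layer captures exactly the subset of $d\times d$ convolutions that are circular. Since the statement only claims a \emph{subset} of the $d\times d$ convolution layer, it suffices to exhibit this circular-convolution subclass; I do not need to capture arbitrary boundary conditions or the full space of $d\times d$ filters of a given support.

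The main obstacle is bookkeeping the block structure: I must verify that, under the grouping of coordinates into $D/C$ blocks of size $C$, the matrices $\mathbf{F}^{-1}\mathrm{diag}(\widehat{w})\mathbf{F}$ arising from circular convolutions actually satisfy the block-wise butterfly sparsity pattern of \cref{fig:channel_butter_block} — i.e., that selecting the $C\cdot l+j$ rows and $C\cdot l+\hat\jmath$ columns yields an ordinary level-$i$ $D/C$-dimensional butterfly factor for every $j,\hat\jmath$. The cleanest way to discharge this is to argue at the level of factor sequences rather than the final matrix: show that each Cooley–Tukey butterfly factor and each diagonal, when "lifted" to the block-wise setting via the coordinate grouping, is itself a block-wise butterfly factor of the appropriate level (the level-$i$ structure commutes with the Kronecker/block embedding), so their product is a block-wise butterfly layer by definition. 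Everything else — invertibility (the filter's DFT must be nonvanishing, a genericity condition matching "non-singular"), and the $\mathcal{O}(\cdot)$ cost — then follows from \cref{thm:jacobian} and \cref{pro:inverse} applied factor by factor. I would present the argument as: (1) fix the circular-convolution subclass and its Fourier diagonalization; (2) import the butterfly factorization of the DFT and bit-reversal from prior work; (3) lift each factor through the block-wise coordinate grouping and check it remains a valid block-wise butterfly factor; (4) conclude the composition is a complex-weight block-wise invertible butterfly layer representing that convolution, completing the proof.
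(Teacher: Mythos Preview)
Your core ingredient---diagonalizing a circular convolution by the DFT and then invoking the Cooley--Tukey butterfly factorization of $\mathbf{F}$---is exactly what the paper imports (as \cref{lemma:conv}, from \citealt{dao2020kaleidoscope}). Where you diverge is in how the \emph{block-wise} structure is deployed. The paper takes $C$ to be the number of input channels and exhibits the specific subset of $d\times d$ convolutions that factor as ``per-pixel $C\times C$ channel mixing'' followed by ``per-channel spatial (circular) convolution'': the dense $C\times C$ blocks realize the channel mixing, while the outer $D/C\times D/C$ butterfly pattern (viewing each block as a scalar) realizes the circulant spatial operator via \cref{lemma:conv}. You instead take $C=1$ (or $C$ equal to one spatial extent) and exhibit single-channel circular convolutions directly, lifting the 2D case by Kronecker/interleaving arguments. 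Both routes prove the stated observation, since it only asks for \emph{some} subset; the paper's choice is more natural here because the block-wise layer was introduced precisely to carry channels (cf.\ \cref{pro:1x1}), and it names the captured subset explicitly in the main text. Your route has the merit of being fully constructive about each factor. One caution: the ``main obstacle'' you flag---checking that the lifted Cooley--Tukey factors respect the block-wise sparsity pattern of \cref{fig:channel_butter_block}---is real, and your factor-by-factor lifting plan is the right way to discharge it, but in your option~(b) you should be explicit that the per-block DFT and the outer-level butterfly factors commute with the coordinate grouping; the paper sidesteps this bookkeeping entirely by citing \cref{lemma:conv} for the spatial part and treating the $C\times C$ blocks as opaque.
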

Specifically, butterfly layers with weights in $\mathbb{C}$ can express any $d\times d$ convolution that can be decomposed into a channel-wise mixing (\eg channel-wise matrix multiplication) and a channel-wise convolution (\ie spatial convolution for each channel). This is an extension of a property of complex-valued naïve butterfly matrices, which can represent any 1D periodic convolution \citep{dao2019learning}. We provide additional discussion on this point in \cref{sec:proof}.
Although butterfly layers can have weights in both $\mathbb{C}$ and $\mathbb{R}$, we 
empirically observe that restricting the butterfly layer weights to be in $\mathbb{R}$ yields good performance, and only consider real-valued weights in the rest of the paper.

\paragraph{Computational complexity.}
\looseness=-1
There exists a trade-off between flexibility and computational complexity in block-wise butterfly layers---larger values of $C$ correspond to more powerful but (potentially) more computationally expensive models.
To address this, we use a more efficient parameterization of the block-wise butterfly factor: each $C \times C$ block is implemented with LU decomposition~\cite{kingma2018glow}, 
which reduces the complexity of computing each of the $D/C$ Jacobian determinants of the $C\times C$ block from $\mathcal{O}(C^3)$ to $\mathcal{O}(C)$. 
Then, since the Jacobian determinant of a naïve butterfly layer can be computed in $\mathcal{O}(D)$, the Jacobian determinant of the block-wise butterfly layer can be evaluated in $\mathcal{O}(D)$. 
Similarly, with LU decomposition for each $C\times C$ block,%
we can show that the inverse of the block-wise invertible butterfly layer with $k$ block-wise butterfly factors can be computed in $\mathcal{O}(kC^2D)$.
This is because the desired computation reduces to inverting a sequence of block-wise butterfly factors. 

\begin{figure}[h]
    \centering
    \includegraphics[width=\linewidth]{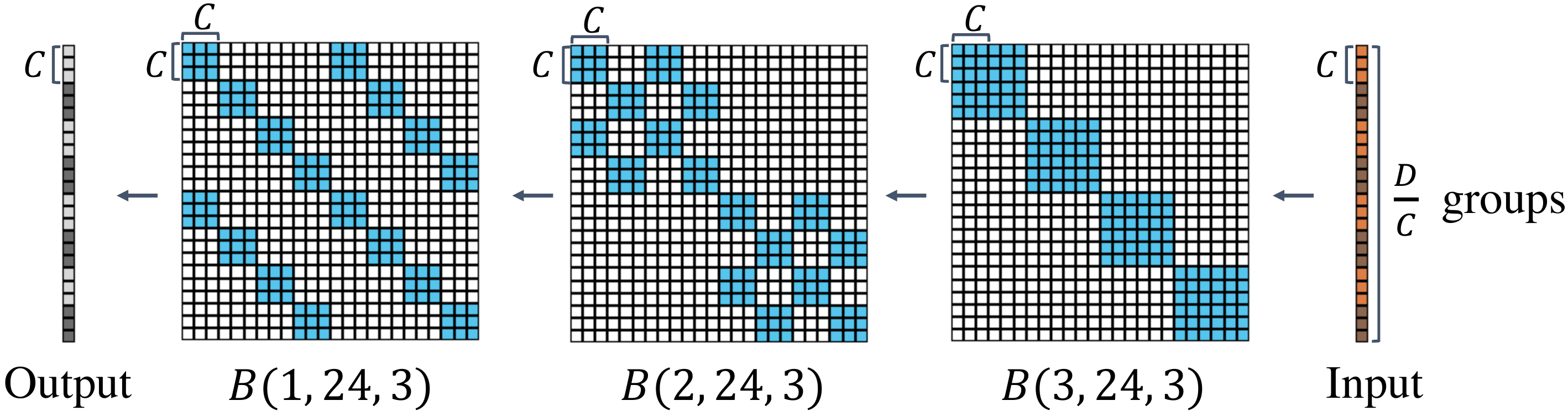}
    \caption{
    An example of a block-wise invertible butterfly \emph{layer} ($D=24, C=3$), which is constructed by composing a sequence of block-wise invertible butterfly \emph{factors}. White entries denote zeros and blue entries denote non-zero values. A $24$-dimensional input is partitioned into $8$ groups where each group has block-size $C=3$ before feeding into the block-wise butterfly layer.
    }
    \label{fig:3d_butter_block}
    \vspace{-3mm}
\end{figure}

\section{Generative Modeling with ButterflyFlow}
\subsection{Architectural Components}
In this section, we introduce how to construct the ButterflyFlow model by leveraging our (block-wise) invertible butterfly layers from Section~\ref{channel_wise}. 
We consider the following invertible layers as architectural building blocks that will be combined together for the final model.

\textbf{Coupling layers.} 
As discussed in Section~\ref{prelim}, the coupling layer~\cite{dinh2014nice,dinh2016density} is a standard primitive in most state-of-the-art normalizing flow models. 
We similarly leverage such coupling layers to increase the expressivity of our ButterflyFlow model.

\textbf{Split and squeeze layers.} \citeauthor{dinh2016density} split and reshuffle the input dimensions for better mixing.
This allows for constructing deeper stacks of coupling layers within the same flow model, increasing its expressive power.
We use them in combination with the above mentioned coupling layers to improve their performance, as done in prior works.

\textbf{Actnorm layers.} 
Actnorm layers are invertible normalization layers that have been developed as an alternative to batch normalization \citep{ioffe2015batch} in flow-based generative models \cite{kingma2018glow}. 
Their parameters are initialized in a data-dependent way \citep{hoogeboom2019emerging,ma2019macow}.
They linearly transform the activations of the input using a scale and translation parameter similar to affine coupling layers, and have been shown to improve training stability.

\looseness=-1
\textbf{Invertible Butterfly layers.}
Given an input $\rvx$, we expand it into a $D$-dimensional vector before feeding it into the block-wise butterfly layer as in Figure~\ref{fig:3d_butter_block}. 
Each layer's block-size $C$ and grouping mechanism are specific to each particular data type.
In the case of RGB images, the block-wise butterfly layers use $C=3$ and group together RGB values of the same pixels (\ie, cells of the same colors in the input vector shown in \cref{fig:3d_butter_block}).

\subsection{Building the ButterflyFlow Model} 
\begin{figure}
\centering
\includegraphics[width=0.73\linewidth]{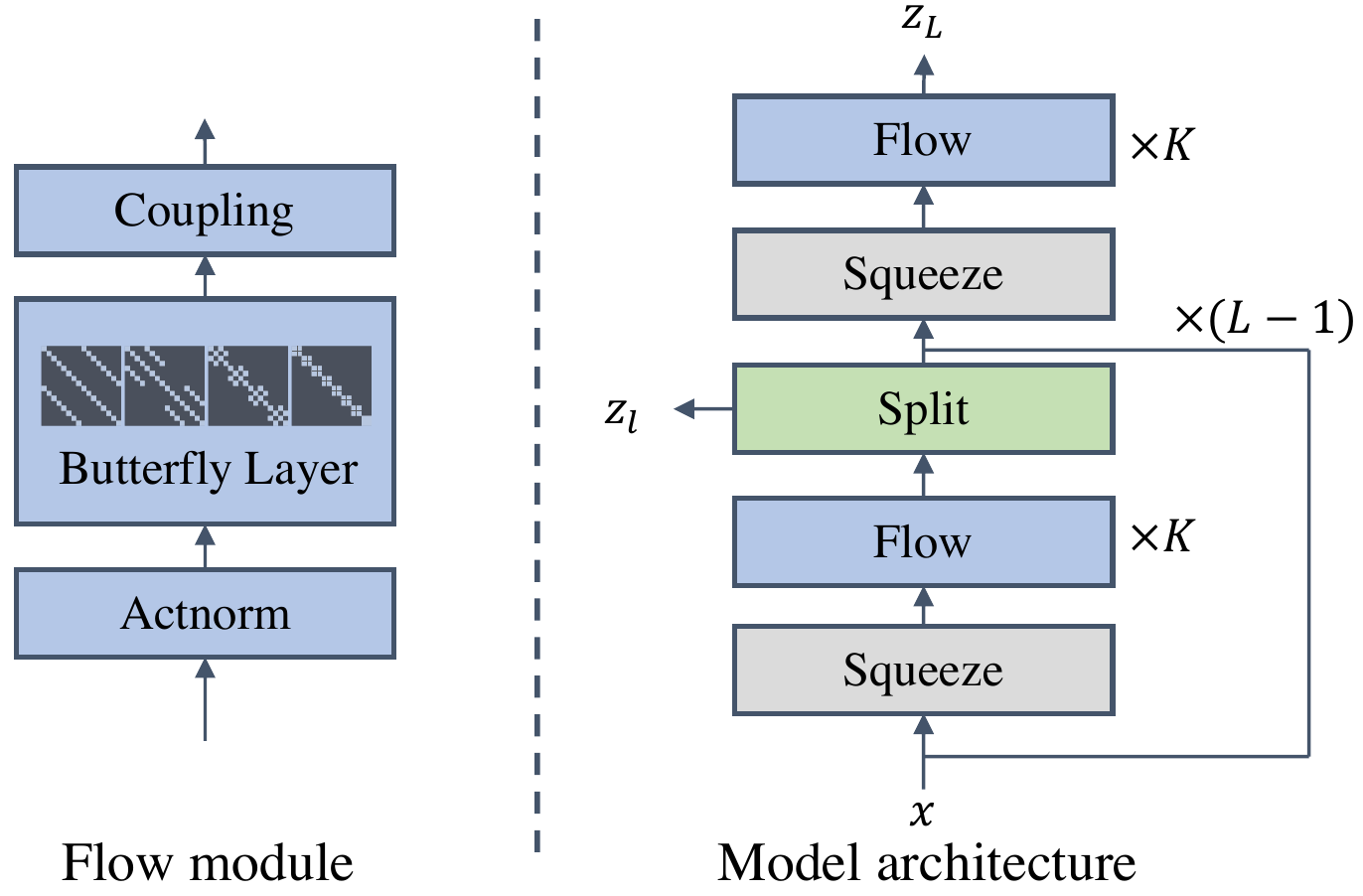}
\caption{
Architecture overview of ButterflyFlow, which shows the transformation of an input $\rvx$ to an intermediate output $\rvz_L$. The left diagram details a single Flow module on the right: the input is first passed through an Actnorm layer~\cite{kingma2018glow} then transformed by a block-wise invertible butterfly layer,
whose output is then fed into a coupling layer~\cite{dinh2016density}. 
Each Flow module is repeated $K$ times. Similar to \cite{kingma2018glow}, we use a hierarchical prior using the split module. We also use the squeeze layer as in \citep{dinh2016density} to split and reshuffle dimensions for better mixing. 
}
\label{fig:butterfly_flow}
\end{figure}
Following recent architectural advancements in flow-based models~\cite{hoogeboom2019emerging,ma2019macow}, ButterflyFlow stacks a series of squeeze, Flow, and split modules together.
This results in an architecture of $L$ levels and $K$ Flow modules per level as shown in Figure~\ref{fig:butterfly_flow}. %
Within each Flow module, we combine our invertible butterfly layers with Actnorm layers and Coupling layers for added expressivity~\cite{hoogeboom2019emerging}.
We elaborate upon our design decisions as well as hyperparameter recommendations for ButterflyFlow
in \cref{app:model}.

Maximum likelihood training of the ButterflyFlow model proceeds in the same fashion as in conventional flow-based generative models via~\cref{eq:flow_objective}.
While maintaining the invertibility of the (block-wise) invertible butterfly layers $\textbf{B}_{\theta}{(i,D, C)}$ during training may present a concern, we note that we do not need to enforce any additional constraints to ensure that $\textbf{B}_{\theta}{(i,D, C)}$ remains non-singular.
This is because the training loss will become infinitely large when $\text{det} (\textbf{B}_{\theta}{(i,D, C)})=0$ (see \cref{eq:flow_objective}). 
In particular, for a butterfly layer, a local non-zero Jacobian determinant (e.g. evaluated at a particular data point) implies a non-zero Jacobian determinant globally---this means that the layer will be invertible.
This special property of the butterfly layer is not generally applicable to conventional model architectures.

\section{Experiments}
In this section, we are interested in investigating three broad questions empirically:
\begin{enumerate}
    \item How effective is ButterflyFlow at density estimation tasks on standard natural image datasets?
    \item How well can ButterflyFlow model datasets with special structures, such as permutation and periodicity?
    \item Is ButterflyFlow indeed more efficient than relevant baselines in terms of wall-clock time and/or memory?
\end{enumerate}
We evaluate ButterflyFlow on both synthetic and real datasets that have the corresponding structures of interest.
We provide additional details on specific experimental settings and hyperparameter configurations in Appendix~\ref{app:exp_details}.

\subsection{Density estimation on images}
We first benchmark our method on standard image datasets to ensure that ButterflyFlow still performs favorably on the usual generative modeling tasks.

\textbf{Datasets.} As in prior works~\cite{hoogeboom2019emerging, lu2020woodbury}, we evaluate our method's performance on MNIST~\cite{deng2012mnist}, CIFAR-10~\cite{krizhevsky2009learning}, and ImageNet-$32\times 32$~\cite{deng2009imagenet}. We use uniform dequantization and standard data augmentation techniques for CIFAR-10 and ImageNet-$32\times 32$ during training.

\textbf{Baselines.} We compare ButterflyFlow against several of the most relevant baselines in terms of methods and model architectures:  MAF~\cite{papamakarios2017masked}, Real NVP~\cite{dinh2016density}, Glow~\cite{kingma2018glow}, Emerging~\cite{hoogeboom2019emerging}, Woodbury~\cite{lu2020woodbury}, and i-ResNet~\cite{behrmann2019invertible}
We follow the standard experimental setups and architectural configurations as in prior works.

\looseness=-1
\textbf{Results.} Quantitative results are shown in Table~\ref{tab:density_estimation_image}, with visualizations of the generated samples in Figure~\ref{fig:gen}. 
We find that ButterflyFlow either outperforms or is on par with all relevant baselines. 
It achieves some improvements on CIFAR-10 and, on ImageNet-$32\times 32$ and MNIST, ButterflyFlow performs comparably to Glow, Emerging, and Woodbury (with the same Glow backbone). This is possibly due to overparametrization of these large models over image datasets; it is unlikely that adding more linear layers will yield significant improvements. We further examine this claim in Section~\ref{structured_exp} by evaluating the performance of shallower variants of ButterflyFlow on smaller datasets.
\begin{table}[ht]
    \centering
    \caption{Density estimation on image datasets. Test set log-likelihood values are in bits per dimension. Lower is better. ButterflyFlow performs favorably relative to all baselines.
    }
    \resizebox{\linewidth}{!}{
    \begin{tabular}{ccccccc}
    \Xhline{3\arrayrulewidth}
    &\multicolumn{1}{c}{MNIST}
    &\multicolumn{1}{c}{CIFAR-10}&\multicolumn{1}{c}{ImageNet 32$\times$32}\\
    \Xhline{2\arrayrulewidth}
     MAF~\cite{papamakarios2017masked} &1.89 &4.31 &-\\
     Real NVP~\cite{dinh2016density}  &1.06 &  3.49&  4.28\\
     Glow~\cite{kingma2018glow} &  1.05& 3.35 & 4.09 \\
     Emerging~\cite{hoogeboom2019emerging} & 1.05  & 3.34& 4.09\\
     Woodbury~\cite{lu2020woodbury}  & 1.05  & 3.35 & 4.09 \\
     Residual Flows~\cite{chen2019residual} &0.97 & 3.28 &4.01\\
     i-DenseNet~\cite{perugachi2021invertible} &- &3.25 &3.98\\
     i-ResNet~\cite{behrmann2019invertible} &1.06 & 3.45 & -\\
     ButterflyFlow (Ours) &{1.05} & {3.33} & {4.09} \\ 
    \Xhline{3\arrayrulewidth}
    \end{tabular}
    }
    \label{tab:density_estimation_image}
\end{table}

\begin{figure}
    \centering
     \begin{subfigure}[b]{0.32\linewidth}
         \centering
         \includegraphics[width=\textwidth]{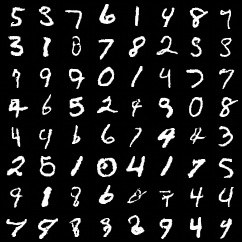}
         \caption{MNIST}
         \label{fig:mnist_viz}
     \end{subfigure}
     \begin{subfigure}[b]{0.32\linewidth}
         \centering
         \includegraphics[width=\textwidth]{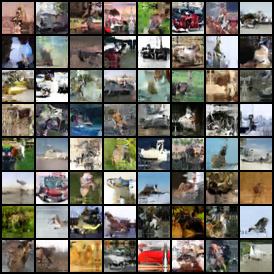}
          \caption{CIFAR-10}
         \label{fig:cifar_viz}
     \end{subfigure}
     \begin{subfigure}[b]{0.32\linewidth}
         \centering
         \includegraphics[width=\textwidth]{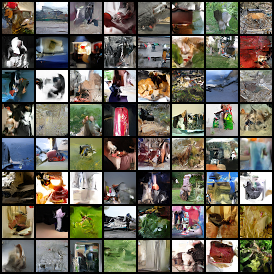}
          \caption{ImageNet-$32\times 32$}
         \label{fig:imagenet_viz}
     \end{subfigure}
     \caption{Uncurated samples from  ButterflyFlow. From left to right: MNIST, CIFAR-10, ImageNet-$32 \times 32$.}
     \label{fig:gen}
\end{figure}

\subsection{Density estimation on permuted image datasets}
\begin{figure*}
    \centering
         \begin{subfigure}[b]{0.19\linewidth}
         \centering
         \includegraphics[width=\textwidth,trim={0 91 0 0},clip]{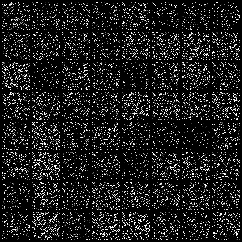}
         \caption{Permuted data (input)}
         \label{fig:data_viz}
     \end{subfigure}
     \begin{subfigure}[b]{0.19\linewidth}
         \centering
         \includegraphics[width=\textwidth,trim={0 91 0 0},clip]{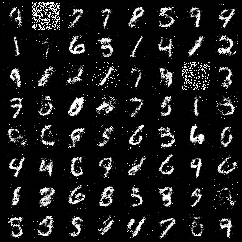}
         \caption{Glow}
         \label{fig:glow_perm_viz}
     \end{subfigure}
     \begin{subfigure}[b]{0.19\linewidth}
         \centering
         \includegraphics[width=\textwidth,trim={0 0 0 91},clip]{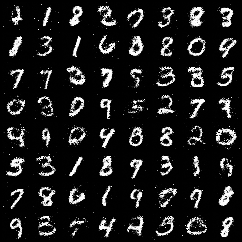}
         \caption{Emerging}
         \label{fig:emerging_perm_viz}
     \end{subfigure}
     \begin{subfigure}[b]{0.19\linewidth}
         \centering
         \includegraphics[width=\textwidth,trim={0 91 0 0},clip]{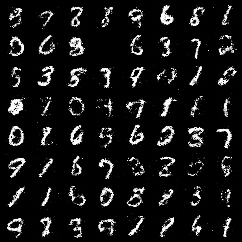}
         \caption{Woodbury}
         \label{fig:emerging_perm_viz}
     \end{subfigure}
     \begin{subfigure}[b]{0.19\linewidth}
         \centering
         \includegraphics[width=\textwidth,trim={0 0 0 91},clip]{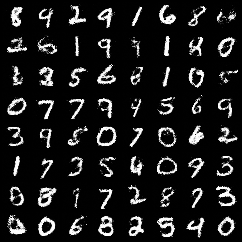}
         \caption{ButterflyFlow (Ours)}
         \label{fig:ours_perm_viz}
     \end{subfigure}
     \caption{Comparison of (unscrambled) generated samples on permuted MNIST. We observe that the Glow, Emerging, and Woodbury transforms struggle to model the permuted data well (as shown by missing, corrupted, or extremely speckled samples), while ButterflyFlow's learnable permutation layer allows it to better capture the permuted structure.
     } 
     \label{fig:permuted_gen}
\end{figure*}

In contrast to many existing linear transformations such as $1 \times 1$ convolutions, invertible butterfly layers are 
theoretically guaranteed to be able to represent a large family of complex linear transformations (\eg, any permutation matrix).
In this section, we demonstrate empirically that ButterflyFlow is expressive enough to capture special structures in the data such as permutations (as in \cref{pro:permutation}) by testing our model on image datasets with built-in permutations.

\textbf{Datasets.} We experiment on a permuted version of MNIST, CIFAR-10, and ImageNet-$32\times 32$ and generate a dataset-wide random permutation matrix. 
The same permutation matrix is used to permute all images from the same dataset. 

\textbf{Baselines.} We compare with Glow~\cite{kingma2018glow}, Emerging~\cite{hoogeboom2019emerging}, and Woodbury~\cite{lu2020woodbury}, which share the same architectural backbone as ButterflyFlow and similarly exploit spatial locality and permutation structures.

\looseness=-1
\textbf{Results.} We test the hypothesis that butterfly layers are more effective at capturing the structure in permuted images  as compared to baselines. 
Intuitively, this is because our butterfly layer is a learnable permutation layer that can capture permutation structure present in the data (Proposition \ref{pro:permutation}). 
The rest of the flow model can then learn the appropriate structure specific to the image dataset itself. 
As shown in Table~\ref{tab:density_estimation_image_perm}, we find that ButterflyFlow outperforms all other methods. 
Specifically, our method achieves significantly lower likelihoods as computed by bits per dimension (BPD) on CIFAR-10 and ImageNet-$32\times 32$.
The performance gap is noticeably closer for MNIST, and we show some visualizations of the generated images (permuted back) in Figure~\ref{fig:permuted_gen}. 
All our baselines are able to reasonably model permuted MNIST, likely due to the large modeling capacity of the Glow-based architecture on lower-dimensional datasets such as MNIST. Thus adding butterfly layers to specifically model permutation in this setting only yields marginal improvements.

\begin{table}[ht]
    \centering
    \caption{Density estimation on image  datasets with permutations. Test set log-likelihood values are reported in bits per dimension. Lower is better. ButterflyFlow outperforms all relevant baselines. %
    }
    \resizebox{\linewidth}{!}{
    \begin{tabular}{ccccccc}
    \Xhline{3\arrayrulewidth}
    &\multicolumn{1}{c}{MNIST}
    &\multicolumn{1}{c}{CIFAR-10}&\multicolumn{1}{c}{ImageNet 32$\times$32}\\
    \Xhline{2\arrayrulewidth}
     Glow~\cite{kingma2018glow} & 1.44 & 5.48 & 6.29 \\
     Emerging~\cite{hoogeboom2019emerging} & 1.43  &5.41  & 6.25\\
     Woodbury~\cite{lu2020woodbury} & 1.43  & 5.41 & 6.26\\
     ButterflyFlow (Ours) &\textbf{1.42} & \textbf{5.11} & \textbf{6.18} \\ 
    \Xhline{3\arrayrulewidth}
    \end{tabular}
    }
    \label{tab:density_estimation_image_perm}
\end{table}

\subsection{Density estimation on structured datasets}
\label{structured_exp}
Many real-world datasets often exhibit (unknown) special types of structures
such as permutation and periodicity. 
Therefore, in addition to modeling images with synthetic permutations, we also showcase a set of experiments where ButterflyFlow can be used to model real-world datasets with periodic structures. In particular, we experiment with galaxy images \citep{ackermann2018using,hoogeboom2019emerging} and the MIMIC-III patient records dataset~\cite{johnson2016mimic} of intensive care units (ICU). 

\paragraph{Galaxy images.} The galaxy dataset is comprised of 5000 images for both train and test sets, and exhibits periodicity as the images are ``continuous''---they represent snapshots of a continuum in space, rather than individual images. 
As shown in Table~\ref{tab:galaxy}, we find that ButterflyFlow outperforms all relevant baselines, achieving a BPD improvement of up to 0.07. We also visualize 100 generated images with 100 test set examples in Figure~\ref{fig:galaxy}.
This result provides further evidence that our invertible butterfly layers excel at capturing naturally-occuring structure in real-world data.
\begin{table}[ht]
\centering
\caption{Comparison of $1 \times 1$ convolutions (Glow), Emerging convolution, Woodbury flows, and ButterflyFlow on the galaxy images dataset. Test set log-likelihood values are reported in bits per dimension. Lower is better. ButterflyFlow outperforms all relevant baselines.}
    \resizebox{0.86\linewidth}{!}{
\begin{tabular}{ll}
\toprule
 & Galaxy \\
\midrule
$1 \times 1$ (Glow)~\citep{kingma2018glow} & 2.02\\ 
Emerging $3 \times 3$~\citep{hoogeboom2019emerging} & 1.98\\ 
Periodic~\citep{hoogeboom2019emerging} & 1.98\\ 
Woodbury~\citep{lu2020woodbury} & 2.01\\   
ButterflyFlow (Ours) & \textbf{1.95} \\ 
\bottomrule
\end{tabular}
}
\label{tab:galaxy}
\end{table}

\begin{figure}[ht]
\centering
     \begin{subfigure}[b]{0.3\linewidth}
         \centering
         \includegraphics[width=\textwidth]{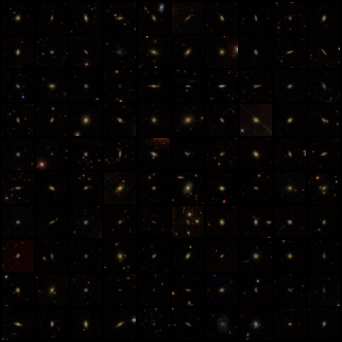}
     \end{subfigure}
     \begin{subfigure}[b]{0.3\linewidth}
         \centering
         \includegraphics[width=\textwidth]{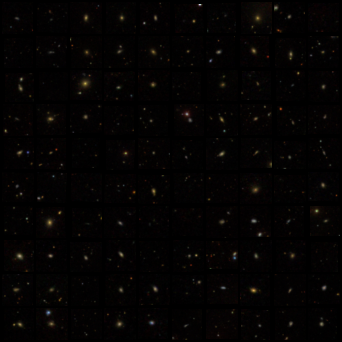}
     \end{subfigure}
\caption{(Left) 10 $\times$ 10 examples from the galaxy images test set. (Right) 10$\times$ 10 samples from the trained ButterflyFlow model. Note that the samples look visually similar.
}
\label{fig:galaxy}
\end{figure}

\paragraph{MIMIC-III waveform database.} 
MIMIC-III is a large-scale dataset containing approximately 30,000 patients' ICU waveforms. For each patient's waveform, two features are recorded: Photoplethysmography (PPG) and Ambulatory Blood Pressure (ABP). 
Since each patient's recording is very long,
we construct a per-patient dataset according to Appendix~\ref{app:mimic_details} and randomly select 3 distinct patient records for our experiments. 
We illustrate some example ground-truth waveforms in Figure~\ref{fig:mimic_data} and highlight its repetitive, periodic structure, which is difficult to capture faithfully with conventional flow-based generative models.

\begin{figure}[ht]
    \centering
     \begin{subfigure}[b]{0.49\linewidth}
         \centering
         \includegraphics[width=\textwidth]{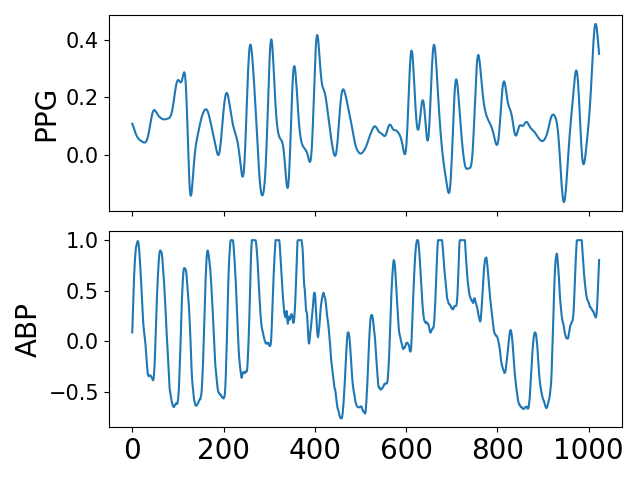}
     \end{subfigure}
     \begin{subfigure}[b]{0.49\linewidth}
         \centering
         \includegraphics[width=\textwidth]{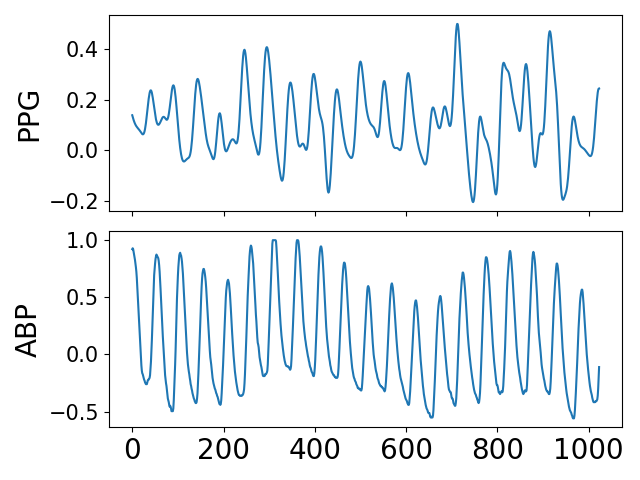}
     \end{subfigure}
     \caption{Two independent data points from the processed MIMIC-III patient waveforms. The x-axis indicates the 1024 intervals at which the signal was subsampled and the y-axis indicates the (normalized) recorded values for the PPG and ABP features.}%
     \label{fig:mimic_data}
\end{figure}

For modeling time series, we compare with Emerging and Periodic convolution baselines~\cite{hoogeboom2019emerging}, as well as Woodbury~\cite{lu2020woodbury}. 
All methods use the same Glow-based backbone of the same depth and levels.
As shown in Table~\ref{tab:density_estimation_mimic}, ButterflyFlow outperforms all baselines by a significant margin. In particular, our approach outperforms all competing methods while using \emph{less than half the number of parameters} required by the second-best performing model, as shown in Table~\ref{tab:density_estimation_mimic_params}. 
Thus, our model is more efficient in terms of space while better modeling the patient data with periodic regularity.

\begin{table}[ht]
    \centering
    \caption{Density estimation results on the MIMIC-III task. We report the test set negative log-likelihood per dimension. Lower is better. ButterflyFlow outperforms all other baselines by a significant margin.
    }
    \resizebox{\linewidth}{!}{
    \begin{tabular}{cccccccc}
    \Xhline{3\arrayrulewidth}
    &\multicolumn{1}{c}{Patient 1}
    &\multicolumn{1}{c}{Patient 2}&\multicolumn{1}{c}{Patient 3}&\multicolumn{1}{c}{Avg.}\\
    \Xhline{2\arrayrulewidth}
     Glow~\cite{kingma2018glow} &-7.21 &  -5.59 & -6.41 & -6.40\\
     Emerging~\cite{hoogeboom2019emerging} & -6.91 & -8.48  & -7.25 & -7.55 \\
     Periodic~\cite{hoogeboom2019emerging} & -8.47 &  -9.623 & -8.73 & -8.94 \\
     Woodbury~\cite{lu2020woodbury} & -11.68 &  -11.83 & -10.91 & -11.47 \\
     ButterflyFlow (Ours) & \textbf{-29.49} & \textbf{-27.07} & \textbf{-27.20} & \textbf{-27.92}\\
    \Xhline{3\arrayrulewidth}
    \end{tabular}
    }
    \label{tab:density_estimation_mimic}
    \vspace{-5mm}
\end{table}

\begin{table}[ht]
\centering
\caption{Total number of parameters for each model trained on the MIMIC-III dataset. ButterflyFlow is the best performing model (as in \cref{tab:density_estimation_mimic}) while using less than half the parameters as compared to baselines.
}
    \resizebox{0.8\linewidth}{!}{
\begin{tabular}{ll}
\toprule
 & \# parameters \\
\midrule
Glow~\citep{kingma2018glow} &36,032\\ 
Emerging~\citep{hoogeboom2019emerging} & 42,576\\ 
Periodic~\citep{hoogeboom2019emerging} & 39,312\\  
Woodbury~\citep{lu2020woodbury} & 48,576\\  
ButterflyFlow (Ours) & \textbf{15,280}  \\ 
\bottomrule
\end{tabular}
}
\label{tab:density_estimation_mimic_params}
\end{table}

Apart from natural image datasets, we find that our ButterflyFlow model shines when modeling real-world data with special underlying structures. Our empirical evaluations demonstrate that our invertible butterfly layers are able to better capture the global regularity than emerging or periodic convolutions, which rely on local spatial structures.

\looseness=-1
\subsection{Running time}
Finally, we benchmark the efficiency of ButterflyFlow,
which exploits the sparsity structure of its underlying butterfly factors.
We compare the forward and backward pass through a single butterfly layer with those of Emerging and Periodic convolution layers across 4 settings: forward/inversion time vs. spatial dimension size and forward/inversion time vs. batch size. We present additional details and comparisons in Appendix~\ref{app:runtime}.
\begin{figure}[ht]
    \centering
     
     \includegraphics[width=\linewidth]{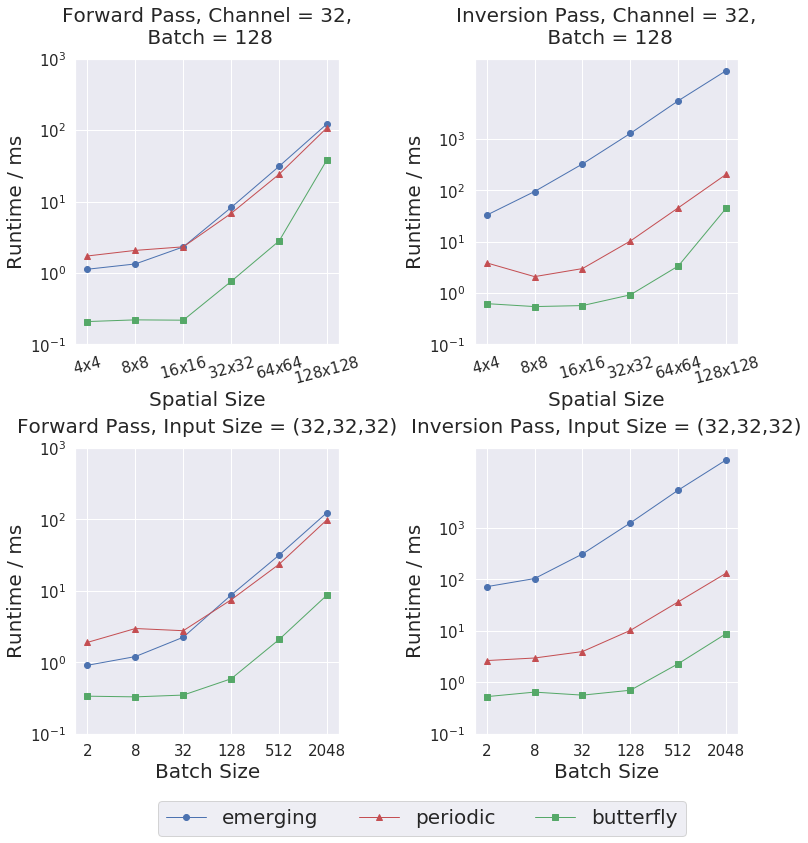}
     \caption{Run-time comparison. The y-axis shows run-time (\textbf{ms}) of each setting in log scale. Our run-time stays consistently lower.}%
     \label{fig:runtime}
     \vspace{-5mm}
\end{figure}
As shown in Figure~\ref{fig:runtime}, our runtime stays consistently lower than baselines, indicating that our butterfly layer is more computationally efficient.

\section{Conclusion}
In this work we proposed ButterflyFlow, a novel class of flow-based generative models parameterized by invertible butterfly layers. 
Drawing inspiration from the literature on learning efficient structured linear transformations, we introduced how butterfly layers more generally can serve as powerful architectural primitives for flow models.
We demonstrated that ButterflyFlow not only achieves strong performance on density estimation tasks for standard image datasets, but also better handles real-world data with naturally-occurring structures such as periodicity and permutations relative to existing baselines.
A current limitation of our approach is that we must manually specify a particular partitioning of the input for cases where its dimension is not divisible by 2. 
It would be interesting to generalize the invertible butterfly layer to handle such cases automatically.
Additionally, exploring further use cases of ButterflyFlow in applications beyond density estimation would be exciting future work. 

\section{Acknowledgement}
The authors would like to thank Jiaming Song for the constructive feedback. This research was support by NSF (\#1651565), AFOSR (FA95501910024), ARO (W911NF-21-1-0125) and Sloan Fellowship.

\nocite{langley00}

\bibliography{references}

\begin{thebibliography}{42}
\providecommand{\natexlab}[1]{#1}
\providecommand{\url}[1]{\texttt{#1}}
\expandafter\ifx\csname urlstyle\endcsname\relax
  \providecommand{\doi}[1]{doi: #1}\else
  \providecommand{\doi}{doi: \begingroup \urlstyle{rm}\Url}\fi

\bibitem[Ackermann et~al.(2018)Ackermann, Schawinski, Zhang, Weigel, and
  Turp]{ackermann2018using}
Ackermann, S., Schawinski, K., Zhang, C., Weigel, A.~K., and Turp, M.~D.
\newblock Using transfer learning to detect galaxy mergers.
\newblock \emph{Monthly Notices of the Royal Astronomical Society},
  479\penalty0 (1):\penalty0 415--425, 2018.

\bibitem[Behrmann et~al.(2019)Behrmann, Grathwohl, Chen, Duvenaud, and
  Jacobsen]{behrmann2019invertible}
Behrmann, J., Grathwohl, W., Chen, R.~T., Duvenaud, D., and Jacobsen, J.-H.
\newblock Invertible residual networks.
\newblock In \emph{International Conference on Machine Learning}, pp.\
  573--582. PMLR, 2019.

\bibitem[Brock et~al.(2018)Brock, Donahue, and Simonyan]{brock2018large}
Brock, A., Donahue, J., and Simonyan, K.
\newblock Large scale gan training for high fidelity natural image synthesis.
\newblock \emph{arXiv preprint arXiv:1809.11096}, 2018.

\bibitem[Brown et~al.(2020)Brown, Mann, Ryder, Subbiah, Kaplan, Dhariwal,
  Neelakantan, Shyam, Sastry, Askell, et~al.]{brown2020language}
Brown, T.~B., Mann, B., Ryder, N., Subbiah, M., Kaplan, J., Dhariwal, P.,
  Neelakantan, A., Shyam, P., Sastry, G., Askell, A., et~al.
\newblock Language models are few-shot learners.
\newblock \emph{arXiv preprint arXiv:2005.14165}, 2020.

\bibitem[Chen et~al.(2020)Chen, Lu, Chenli, Zhu, and Tian]{chen2020vflow}
Chen, J., Lu, C., Chenli, B., Zhu, J., and Tian, T.
\newblock Vflow: More expressive generative flows with variational data
  augmentation.
\newblock In \emph{International Conference on Machine Learning}, pp.\
  1660--1669. PMLR, 2020.

\bibitem[Chen et~al.(2019)Chen, Behrmann, Duvenaud, and
  Jacobsen]{chen2019residual}
Chen, R.~T., Behrmann, J., Duvenaud, D.~K., and Jacobsen, J.-H.
\newblock Residual flows for invertible generative modeling.
\newblock \emph{Advances in Neural Information Processing Systems}, 32, 2019.

\bibitem[Chowdhery et~al.(2022)Chowdhery, Narang, Devlin, Bosma, Mishra,
  Roberts, Barham, Chung, Sutton, Gehrmann, et~al.]{chowdhery2022palm}
Chowdhery, A., Narang, S., Devlin, J., Bosma, M., Mishra, G., Roberts, A.,
  Barham, P., Chung, H.~W., Sutton, C., Gehrmann, S., et~al.
\newblock Palm: Scaling language modeling with pathways.
\newblock \emph{arXiv preprint arXiv:2204.02311}, 2022.

\bibitem[Dao et~al.(2019)Dao, Gu, Eichhorn, Rudra, and R{\'e}]{dao2019learning}
Dao, T., Gu, A., Eichhorn, M., Rudra, A., and R{\'e}, C.
\newblock Learning fast algorithms for linear transforms using butterfly
  factorizations.
\newblock In \emph{International conference on machine learning}, pp.\
  1517--1527. PMLR, 2019.

\bibitem[Dao et~al.(2020)Dao, Sohoni, Gu, Eichhorn, Blonder, Leszczynski,
  Rudra, and R{\'e}]{dao2020kaleidoscope}
Dao, T., Sohoni, N.~S., Gu, A., Eichhorn, M., Blonder, A., Leszczynski, M.,
  Rudra, A., and R{\'e}, C.
\newblock Kaleidoscope: An efficient, learnable representation for all
  structured linear maps.
\newblock \emph{arXiv preprint arXiv:2012.14966}, 2020.

\bibitem[De~Cao \& Kipf(2018)De~Cao and Kipf]{de2018molgan}
De~Cao, N. and Kipf, T.
\newblock Molgan: An implicit generative model for small molecular graphs.
\newblock \emph{arXiv preprint arXiv:1805.11973}, 2018.

\bibitem[Deng et~al.(2009)Deng, Dong, Socher, Li, Li, and
  Fei-Fei]{deng2009imagenet}
Deng, J., Dong, W., Socher, R., Li, L.-J., Li, K., and Fei-Fei, L.
\newblock Imagenet: A large-scale hierarchical image database.
\newblock In \emph{2009 IEEE conference on computer vision and pattern
  recognition}, pp.\  248--255. Ieee, 2009.

\bibitem[Deng(2012)]{deng2012mnist}
Deng, L.
\newblock The mnist database of handwritten digit images for machine learning
  research.
\newblock \emph{IEEE Signal Processing Magazine}, 29\penalty0 (6):\penalty0
  141--142, 2012.

\bibitem[Dinh et~al.(2014)Dinh, Krueger, and Bengio]{dinh2014nice}
Dinh, L., Krueger, D., and Bengio, Y.
\newblock Nice: Non-linear independent components estimation.
\newblock \emph{arXiv preprint arXiv:1410.8516}, 2014.

\bibitem[Dinh et~al.(2016)Dinh, Sohl-Dickstein, and Bengio]{dinh2016density}
Dinh, L., Sohl-Dickstein, J., and Bengio, S.
\newblock Density estimation using real nvp.
\newblock \emph{arXiv preprint arXiv:1605.08803}, 2016.

\bibitem[Finzi et~al.(2019)Finzi, Izmailov, Maddox, Kirichenko, and
  Wilson]{finzi2019invertible}
Finzi, M., Izmailov, P., Maddox, W., Kirichenko, P., and Wilson, A.~G.
\newblock Invertible convolutional networks.
\newblock In \emph{Workshop on Invertible Neural Nets and Normalizing Flows,
  International Conference on Machine Learning}, volume~2, 2019.

\bibitem[G{\'o}mez-Bombarelli et~al.(2018)G{\'o}mez-Bombarelli, Wei, Duvenaud,
  Hern{\'a}ndez-Lobato, S{\'a}nchez-Lengeling, Sheberla, Aguilera-Iparraguirre,
  Hirzel, Adams, and Aspuru-Guzik]{gomez2018automatic}
G{\'o}mez-Bombarelli, R., Wei, J.~N., Duvenaud, D., Hern{\'a}ndez-Lobato,
  J.~M., S{\'a}nchez-Lengeling, B., Sheberla, D., Aguilera-Iparraguirre, J.,
  Hirzel, T.~D., Adams, R.~P., and Aspuru-Guzik, A.
\newblock Automatic chemical design using a data-driven continuous
  representation of molecules.
\newblock \emph{ACS central science}, 4\penalty0 (2):\penalty0 268--276, 2018.

\bibitem[Grci{\'c} et~al.(2021)Grci{\'c}, Grubi{\v{s}}i{\'c}, and
  {\v{S}}egvi{\'c}]{grcic2021densely}
Grci{\'c}, M., Grubi{\v{s}}i{\'c}, I., and {\v{S}}egvi{\'c}, S.
\newblock Densely connected normalizing flows.
\newblock \emph{Advances in Neural Information Processing Systems},
  34:\penalty0 23968--23982, 2021.

\bibitem[Ho et~al.(2019)Ho, Chen, Srinivas, Duan, and Abbeel]{ho2019flow++}
Ho, J., Chen, X., Srinivas, A., Duan, Y., and Abbeel, P.
\newblock Flow++: Improving flow-based generative models with variational
  dequantization and architecture design.
\newblock In \emph{International Conference on Machine Learning}, pp.\
  2722--2730. PMLR, 2019.

\bibitem[Ho et~al.(2020)Ho, Jain, and Abbeel]{ho2020denoising}
Ho, J., Jain, A., and Abbeel, P.
\newblock Denoising diffusion probabilistic models.
\newblock \emph{Advances in Neural Information Processing Systems},
  33:\penalty0 6840--6851, 2020.

\bibitem[Ho et~al.(2022)Ho, Salimans, Gritsenko, Chan, Norouzi, and
  Fleet]{ho2022video}
Ho, J., Salimans, T., Gritsenko, A., Chan, W., Norouzi, M., and Fleet, D.~J.
\newblock Video diffusion models.
\newblock \emph{arXiv preprint arXiv:2204.03458}, 2022.

\bibitem[Hoogeboom et~al.(2019)Hoogeboom, Van Den~Berg, and
  Welling]{hoogeboom2019emerging}
Hoogeboom, E., Van Den~Berg, R., and Welling, M.
\newblock Emerging convolutions for generative normalizing flows.
\newblock In \emph{International Conference on Machine Learning}, pp.\
  2771--2780. PMLR, 2019.

\bibitem[Hoogeboom et~al.(2020)Hoogeboom, Garcia~Satorras, Tomczak, and
  Welling]{hoogeboom2020convolution}
Hoogeboom, E., Garcia~Satorras, V., Tomczak, J., and Welling, M.
\newblock The convolution exponential and generalized sylvester flows.
\newblock \emph{Advances in Neural Information Processing Systems},
  33:\penalty0 18249--18260, 2020.

\bibitem[Ioffe \& Szegedy(2015)Ioffe and Szegedy]{ioffe2015batch}
Ioffe, S. and Szegedy, C.
\newblock Batch normalization: Accelerating deep network training by reducing
  internal covariate shift.
\newblock In \emph{International conference on machine learning}, pp.\
  448--456. PMLR, 2015.

\bibitem[Johnson et~al.(2016)Johnson, Pollard, Shen, Li-Wei, Feng, Ghassemi,
  Moody, Szolovits, Celi, and Mark]{johnson2016mimic}
Johnson, A.~E., Pollard, T.~J., Shen, L., Li-Wei, H.~L., Feng, M., Ghassemi,
  M., Moody, B., Szolovits, P., Celi, L.~A., and Mark, R.~G.
\newblock Mimic-iii, a freely accessible critical care database.
\newblock \emph{Scientific data}, 3\penalty0 (1):\penalty0 1--9, 2016.

\bibitem[Kadurin et~al.(2017)Kadurin, Nikolenko, Khrabrov, Aliper, and
  Zhavoronkov]{kadurin2017drugan}
Kadurin, A., Nikolenko, S., Khrabrov, K., Aliper, A., and Zhavoronkov, A.
\newblock drugan: an advanced generative adversarial autoencoder model for de
  novo generation of new molecules with desired molecular properties in silico.
\newblock \emph{Molecular pharmaceutics}, 14\penalty0 (9):\penalty0 3098--3104,
  2017.

\bibitem[Karami et~al.(2019)Karami, Schuurmans, Sohl-Dickstein, Dinh, and
  Duckworth]{NEURIPS2019_b1f62fa9}
Karami, M., Schuurmans, D., Sohl-Dickstein, J., Dinh, L., and Duckworth, D.
\newblock Invertible convolutional flow.
\newblock In Wallach, H., Larochelle, H., Beygelzimer, A., d\textquotesingle
  Alch\'{e}-Buc, F., Fox, E., and Garnett, R. (eds.), \emph{Advances in Neural
  Information Processing Systems}, volume~32. Curran Associates, Inc., 2019.
\newblock URL
  \url{https://proceedings.neurips.cc/paper/2019/file/b1f62fa99de9f27a048344d55c5ef7a6-Paper.pdf}.

\bibitem[Karras et~al.(2020)Karras, Laine, Aittala, Hellsten, Lehtinen, and
  Aila]{karras2020analyzing}
Karras, T., Laine, S., Aittala, M., Hellsten, J., Lehtinen, J., and Aila, T.
\newblock Analyzing and improving the image quality of stylegan.
\newblock In \emph{Proceedings of the IEEE/CVF Conference on Computer Vision
  and Pattern Recognition}, pp.\  8110--8119, 2020.

\bibitem[Kingma \& Dhariwal(2018)Kingma and Dhariwal]{kingma2018glow}
Kingma, D.~P. and Dhariwal, P.
\newblock Glow: Generative flow with invertible 1x1 convolutions.
\newblock \emph{arXiv preprint arXiv:1807.03039}, 2018.

\bibitem[Kong et~al.(2020)Kong, Ping, Huang, Zhao, and
  Catanzaro]{kong2020diffwave}
Kong, Z., Ping, W., Huang, J., Zhao, K., and Catanzaro, B.
\newblock Diffwave: A versatile diffusion model for audio synthesis.
\newblock \emph{arXiv preprint arXiv:2009.09761}, 2020.

\bibitem[Krizhevsky et~al.(2009)Krizhevsky, Hinton,
  et~al.]{krizhevsky2009learning}
Krizhevsky, A., Hinton, G., et~al.
\newblock Learning multiple layers of features from tiny images.
\newblock 2009.

\bibitem[Kumar et~al.(2019)Kumar, Babaeizadeh, Erhan, Finn, Levine, Dinh, and
  Kingma]{kumar2019videoflow}
Kumar, M., Babaeizadeh, M., Erhan, D., Finn, C., Levine, S., Dinh, L., and
  Kingma, D.
\newblock Videoflow: A flow-based generative model for video.
\newblock \emph{arXiv preprint arXiv:1903.01434}, 2\penalty0 (5), 2019.

\bibitem[Lu \& Huang(2020)Lu and Huang]{lu2020woodbury}
Lu, Y. and Huang, B.
\newblock Woodbury transformations for deep generative flows.
\newblock \emph{arXiv preprint arXiv:2002.12229}, 2020.

\bibitem[Ma et~al.(2019)Ma, Kong, Zhang, and Hovy]{ma2019macow}
Ma, X., Kong, X., Zhang, S., and Hovy, E.
\newblock Macow: Masked convolutional generative flow.
\newblock \emph{arXiv preprint arXiv:1902.04208}, 2019.

\bibitem[Oord et~al.(2018)Oord, Li, Babuschkin, Simonyan, Vinyals, Kavukcuoglu,
  Driessche, Lockhart, Cobo, Stimberg, et~al.]{oord2018parallel}
Oord, A., Li, Y., Babuschkin, I., Simonyan, K., Vinyals, O., Kavukcuoglu, K.,
  Driessche, G., Lockhart, E., Cobo, L., Stimberg, F., et~al.
\newblock Parallel wavenet: Fast high-fidelity speech synthesis.
\newblock In \emph{International conference on machine learning}, pp.\
  3918--3926. PMLR, 2018.

\bibitem[Papamakarios et~al.(2017)Papamakarios, Pavlakou, and
  Murray]{papamakarios2017masked}
Papamakarios, G., Pavlakou, T., and Murray, I.
\newblock Masked autoregressive flow for density estimation.
\newblock \emph{arXiv preprint arXiv:1705.07057}, 2017.

\bibitem[Papamakarios et~al.(2019)Papamakarios, Nalisnick, Rezende, Mohamed,
  and Lakshminarayanan]{papamakarios2019normalizing}
Papamakarios, G., Nalisnick, E., Rezende, D.~J., Mohamed, S., and
  Lakshminarayanan, B.
\newblock Normalizing flows for probabilistic modeling and inference.
\newblock \emph{arXiv preprint arXiv:1912.02762}, 2019.

\bibitem[Parker(1995)]{parker1995random}
Parker, D.~S.
\newblock Random butterfly transformations with applications in computational
  linear algebra.
\newblock 1995.

\bibitem[Perugachi-Diaz et~al.(2021)Perugachi-Diaz, Tomczak, and
  Bhulai]{perugachi2021invertible}
Perugachi-Diaz, Y., Tomczak, J., and Bhulai, S.
\newblock Invertible densenets with concatenated lipswish.
\newblock \emph{Advances in Neural Information Processing Systems},
  34:\penalty0 17246--17257, 2021.

\bibitem[Rezende \& Mohamed(2015)Rezende and Mohamed]{rezende2015variational}
Rezende, D. and Mohamed, S.
\newblock Variational inference with normalizing flows.
\newblock In \emph{International conference on machine learning}, pp.\
  1530--1538. PMLR, 2015.

\bibitem[Slapničar et~al.(2019)Slapničar, Mlakar, and Luštrek]{s19153420}
Slapničar, G., Mlakar, N., and Luštrek, M.
\newblock Blood pressure estimation from photoplethysmogram using a
  spectro-temporal deep neural network.
\newblock \emph{Sensors}, 19\penalty0 (15), 2019.
\newblock ISSN 1424-8220.
\newblock \doi{10.3390/s19153420}.
\newblock URL \url{https://www.mdpi.com/1424-8220/19/15/3420}.

\bibitem[Song et~al.(2020)Song, Sohl-Dickstein, Kingma, Kumar, Ermon, and
  Poole]{song2020score}
Song, Y., Sohl-Dickstein, J., Kingma, D.~P., Kumar, A., Ermon, S., and Poole,
  B.
\newblock Score-based generative modeling through stochastic differential
  equations.
\newblock \emph{arXiv preprint arXiv:2011.13456}, 2020.

\bibitem[Vahdat \& Kautz(2020)Vahdat and Kautz]{vahdat2020nvae}
Vahdat, A. and Kautz, J.
\newblock Nvae: A deep hierarchical variational autoencoder.
\newblock \emph{Advances in Neural Information Processing Systems},
  33:\penalty0 19667--19679, 2020.

\end{thebibliography}
\bibliographystyle{icml2022}

\newpage
\appendix
\onecolumn

\section{Proof}
\label{sec:proof}
In this section, we provide proofs for the main paper.
\begin{lemma}
\label{lemma:beuuterfly_level1}
The determinant of any (invertible) level-one butterfly factor $\textbf{B}_{\theta}{(1,D)}$ can be computed in $\mathcal{O}(D)$.
\end{lemma}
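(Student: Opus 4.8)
The plan is to compute the determinant of a level-one butterfly factor $\textbf{B}_{\theta}{(1,D)}$ directly by exploiting its sparsity structure. Recall that $\textbf{B}_{\theta}{(1,D)}$ is built from four diagonal $D/2 \times D/2$ blocks: writing $m = D/2$, we have
\begin{equation*}
\textbf{B}_{\theta}{(1,D)} = \begin{bmatrix} \textbf{D}_1 & \textbf{D}_2 \\ \textbf{D}_3 & \textbf{D}_4 \end{bmatrix},
\end{equation*}
where each $\textbf{D}_j = \mathrm{diag}(d^{(j)}_1, \dots, d^{(j)}_m)$ is diagonal. The key observation is that after a symmetric permutation of rows and columns — interleaving index $r$ with index $m + r$ for each $r \in \{1,\dots,m\}$ — the matrix becomes block diagonal with $m$ blocks of size $2 \times 2$, where the $r$-th block is $\begin{bmatrix} d^{(1)}_r & d^{(2)}_r \\ d^{(3)}_r & d^{(4)}_r \end{bmatrix}$. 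Since a symmetric permutation (conjugation by a permutation matrix $\textbf{P}$, i.e. $\textbf{P}^\T \textbf{B} \textbf{P}$) preserves the determinant, and the determinant of a block-diagonal matrix is the product of the determinants of its blocks, we get
\begin{equation*}
\det \textbf{B}_{\theta}{(1,D)} = \prod_{r=1}^{m} \left( d^{(1)}_r d^{(4)}_r - d^{(2)}_r d^{(3)}_r \right).
\end{equation*}

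The main steps are therefore: (1) write down the explicit block form of $\textbf{B}_{\theta}{(1,D)}$ with its four diagonal blocks; (2) exhibit the interleaving permutation $\sigma$ and verify that conjugating by the associated permutation matrix yields a block-diagonal matrix with $2\times 2$ blocks (this is just a bookkeeping check on which entries land where); (3) invoke the multiplicativity of the determinant over block-diagonal matrices and the fact that conjugation by a permutation matrix leaves the determinant unchanged (since $\det \textbf{P} = \pm 1$ and $\det(\textbf{P}^\T \textbf{B} \textbf{P}) = \det \textbf{P}^\T \det \textbf{B} \det \textbf{P} = \det \textbf{B}$); (4) conclude that the determinant is a product of $D/2$ terms, each computable from exactly four entries of the matrix in $\mathcal{O}(1)$ time, so the whole product costs $\mathcal{O}(D)$ arithmetic operations.

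The only mild subtlety — and the step I would be most careful about — is step (2): making sure the permutation is applied symmetrically to both rows and columns so that the determinant is genuinely preserved, rather than applying an unmatched row permutation (which would only preserve $|\det|$ up to sign, still fine here, but less clean) and genuinely confirming that the diagonal structure of the four blocks forces every off-diagonal $2\times2$-block of the conjugated matrix to vanish. Concretely, entry $(\sigma^{-1}(i), \sigma^{-1}(j))$ of the conjugated matrix equals entry $(i,j)$ of the original; the original has a nonzero entry at $(i,j)$ only when $i \equiv j \pmod{m}$, which under the interleaving map translates exactly to "$i$ and $j$ belong to the same $2\times2$ block." Everything else is routine. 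An alternative that avoids permutations altogether is to use the Schur complement / block-determinant formula $\det \begin{bmatrix} \textbf{D}_1 & \textbf{D}_2 \\ \textbf{D}_3 & \textbf{D}_4 \end{bmatrix} = \det(\textbf{D}_1)\det(\textbf{D}_4 - \textbf{D}_3 \textbf{D}_1^{-1} \textbf{D}_2)$ when $\textbf{D}_1$ is invertible, and note that $\textbf{D}_4 - \textbf{D}_3 \textbf{D}_1^{-1} \textbf{D}_2$ is again diagonal with entries $d^{(4)}_r - d^{(3)}_r d^{(2)}_r / d^{(1)}_r$, giving the same product; but this needs a separate (easy) argument for the case $\textbf{D}_1$ singular, so I would present the permutation argument as the primary one since it handles all cases uniformly.
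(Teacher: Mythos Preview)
Your proof is correct and arrives at exactly the same product formula as the paper, but the justification you give is different from the paper's. The paper writes the same $2\times 2$ block form with diagonal blocks $\textbf{D}_1,\dots,\textbf{D}_4$ and then invokes the identity $\det\begin{bmatrix}\textbf{D}_1&\textbf{D}_2\\\textbf{D}_3&\textbf{D}_4\end{bmatrix}=\det(\textbf{D}_1\textbf{D}_4-\textbf{D}_2\textbf{D}_3)$ directly, relying (implicitly) on the fact that the block-determinant formula $\det(AD-BC)$ is valid when the blocks commute, as diagonal matrices do. You instead exhibit an explicit interleaving permutation that conjugates $\textbf{B}_\theta(1,D)$ into a block-diagonal matrix with $D/2$ many $2\times 2$ blocks, and read off the determinant from there. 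Your route is slightly longer but more self-contained: it works uniformly without needing to quote the commuting-block determinant identity or handle the $\textbf{D}_1$-singular case separately (as your Schur-complement alternative would require). The paper's route is terser but leaves the ``it is easy to see'' step unjustified. Either way, the $\mathcal{O}(D)$ conclusion follows immediately from the shared product formula.
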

\begin{proof}
According to the definition of butterfly factor, we can write $\textbf{B}_{\theta}{(1,D)}=
\begin{bmatrix}
\textbf{D}_1 & \textbf{D}_2\\
\textbf{D}_3 & \textbf{D}_4
\end{bmatrix},$ 
where $\textbf{D}_i$ is a $D/2 \times D/2$ diagonal matrix. 
It is easy to see that 
$\text{det}(\textbf{B}_{\theta}{(1,D)})=\text{det}(\textbf{D}_1 \textbf{D}_4 - \textbf{D}_2 \textbf{D}_3)=\prod_{j=1}^{D/2}\big(\textbf{D}_1[j,j] \textbf{D}_4[j,j] -\textbf{D}_2[j,j] \textbf{D}_3[j,j]\big)$, where $\textbf{D}_i[j,j]$ denotes the $(j,j)$-th entry for $\textbf{D}_i$.
The Jacobian determinant of $\textbf{B}_{\theta}{(1,D)}$ can thus be computed in $\mathcal{O}(D)$.
\end{proof}

\determinant*

\begin{proof}
By definition, we have
\begin{equation*}
    \textbf{B}_{\theta}(i,D)=
    \begin{bmatrix}
    \textbf{B}_{1}(1, D/2^{i-1}), &\textbf{0} &... &\textbf{0}\\
    \textbf{0} &\textbf{B}_{2}(1, D/2^{i-1}) &... &\textbf{0}\\
    &...\\
    \textbf{0} &\textbf{0} &... &\textbf{B}_{2^{i-1}}(1, D/2^{i-1})
    \end{bmatrix}
\end{equation*}
where $\textbf{B}_{j}(1, D/2^{i-1})$ is a level-one $D/2^{i-1}$-dimensional (invertible) butterfly factor
and $\textbf{0}$ is the $D/2^{i-1}\times D/2^{i-1}$ zero matrix. 
Using the property of diagonal block matrices, we have
\begin{equation}
    \text{det}(\textbf{B}_{\theta}(i,D))=\prod_{j=1}^{2^{i-1}}\text{det}(\textbf{B}_{j}(1,D/2^{i-1})).
\end{equation}
From \cref{lemma:beuuterfly_level1}, we know computing each $\text{det}(\textbf{B}_{j}(1,D/2^{i-1}))$ takes $\mathcal{O}(D/2^{i-1})$, computing $\text{det}(\textbf{B}_{\theta}(i,D))$ thus takes $2^{i-1}\mathcal{O}(D/2^{i-1})=\mathcal{O}(D)$. 
\end{proof}

\inverse*
To prove \cref{pro:inverse},
we first prove \cref{lemma_butterfly_inverse}.

\begin{lemma} %
\label{lemma_butterfly_inverse}
Assuming $\textbf{B}_{\theta}{(1,D)}$ is non-singular, then its inverse  $\textbf{B}_{\theta}^{-1}{(1,D)}$ is a $D$-dimensional level-$i$ butterfly factor that can be computed in $\mathcal{O}(D)$ given $\textbf{B}_{\theta}{(1,D)}$.
\end{lemma}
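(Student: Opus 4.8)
The plan is to mirror the structure of \cref{lemma:beuuterfly_level1}, but now working with the inverse rather than the determinant. Write $\textbf{B}_{\theta}{(1,D)}=\begin{bmatrix}\textbf{D}_1 & \textbf{D}_2\\ \textbf{D}_3 & \textbf{D}_4\end{bmatrix}$ where each $\textbf{D}_i$ is a $D/2\times D/2$ diagonal matrix. The key observation is that this matrix, up to a simultaneous permutation of rows and columns, is block-diagonal with $D/2$ blocks of size $2\times 2$: grouping index $j$ with index $j + D/2$ for $j = 1,\dots,D/2$, the $j$-th $2\times 2$ block is $\begin{bmatrix}\textbf{D}_1[j,j] & \textbf{D}_2[j,j]\\ \textbf{D}_3[j,j] & \textbf{D}_4[j,j]\end{bmatrix}$. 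Since a permutation-similarity of a block-diagonal matrix is inverted blockwise, $\textbf{B}_{\theta}^{-1}{(1,D)}$ has exactly the same sparsity pattern, i.e. it is again of the form $\begin{bmatrix}\widetilde{\textbf{D}}_1 & \widetilde{\textbf{D}}_2\\ \widetilde{\textbf{D}}_3 & \widetilde{\textbf{D}}_4\end{bmatrix}$ with diagonal $\widetilde{\textbf{D}}_i$, hence is itself a level-one $D$-dimensional butterfly factor.

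Next I would carry out the actual inversion. Non-singularity of $\textbf{B}_{\theta}{(1,D)}$ is equivalent, via the block-diagonal-up-to-permutation structure, to each $2\times 2$ block being non-singular, i.e. $\delta_j := \textbf{D}_1[j,j]\,\textbf{D}_4[j,j] - \textbf{D}_2[j,j]\,\textbf{D}_3[j,j] \neq 0$ for all $j$ (the product of these is exactly the determinant computed in \cref{lemma:beuuterfly_level1}). Each $2\times 2$ inverse is then the standard closed form, so the entries of $\widetilde{\textbf{D}}_i$ are $\widetilde{\textbf{D}}_1[j,j] = \textbf{D}_4[j,j]/\delta_j$, $\widetilde{\textbf{D}}_2[j,j] = -\textbf{D}_2[j,j]/\delta_j$, $\widetilde{\textbf{D}}_3[j,j] = -\textbf{D}_3[j,j]/\delta_j$, $\widetilde{\textbf{D}}_4[j,j] = \textbf{D}_1[j,j]/\delta_j$. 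Computing all $D/2$ values $\delta_j$ and then all $2D$ entries of the four $\widetilde{\textbf{D}}_i$ is a constant amount of arithmetic per index $j$, giving total cost $\mathcal{O}(D)$.

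Two small bookkeeping points remain. First, the lemma statement says ``level-$i$'' where it should read ``level-one'' for internal consistency (the matrix being inverted is $\textbf{B}_{\theta}{(1,D)}$); I would silently read it as level-one. Second, one should note this requires $D$ even so that the $D/2\times D/2$ partition makes sense, which is part of the standing assumption that $D$ is a power of $2$. Given \cref{lemma_butterfly_inverse}, \cref{pro:inverse} then follows by exactly the argument used for \cref{thm:jacobian}: a level-$i$ factor is block-diagonal with $2^{i-1}$ diagonal blocks each a level-one $D/2^{i-1}$-dimensional butterfly factor, so its inverse is the block-diagonal matrix of the corresponding inverses, each of which is again a level-one $D/2^{i-1}$-dimensional butterfly factor by the lemma and computable in $\mathcal{O}(D/2^{i-1})$; summing over the $2^{i-1}$ blocks gives $\mathcal{O}(D)$, and applying the resulting (equally sparse) factor to a vector $\mbf{z}$ also costs $\mathcal{O}(D)$ since it has $\mathcal{O}(D)$ nonzero entries.

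The main obstacle, such as it is, is purely one of exposition rather than mathematics: making the permutation-similarity reduction to $2\times 2$ blocks rigorous (or, alternatively, verifying directly by block-matrix multiplication that the claimed $\widetilde{\textbf{D}}_i$ satisfy $\textbf{B}_{\theta}{(1,D)}\,\textbf{B}_{\theta}^{-1}{(1,D)} = \textbf{I}$, using that diagonal matrices commute). Everything else is routine $2\times 2$ linear algebra and complexity counting.
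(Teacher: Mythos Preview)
Your proposal is correct and takes essentially the same approach as the paper: both write $\textbf{B}_{\theta}(1,D)$ as a $2\times 2$ block matrix of diagonals, invert it via the standard $2\times 2$ inverse formula applied entrywise (the paper writes this at the level of diagonal matrices with element-wise products, you phrase it via permutation-similarity to a block-diagonal matrix of $2\times 2$ blocks), observe that the resulting blocks are again diagonal so the inverse is a level-one butterfly factor, and count $\mathcal{O}(D)$ arithmetic. Your remark that ``level-$i$'' in the statement should read ``level-one'' is also correct.
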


\begin{proof}
According to the definition of butterfly factor, we can write $\textbf{B}_{\theta}{(1,D)}=
\begin{bmatrix}
\textbf{D}_1 & \textbf{D}_2\\
\textbf{D}_3 & \textbf{D}_4
\end{bmatrix},$ 
where $\textbf{D}_i$ is a $D/2 \times D/2$ diagonal matrix. 
The inverse of $\textbf{B}_{\theta}{(1,D)}$ can be computed as
\begin{equation}
\label{eq:butterfly_level1_inverse}
\textbf{B}^{-1}_{\theta}{(1,D)}=
\begin{bmatrix}
-\textbf{D}_4 / (\textbf{D}_3 \odot \textbf{D}_2 -\textbf{D}_4\odot \textbf{D}_1) & \textbf{D}_2 / (\textbf{D}_3\odot \textbf{D}_2 - \textbf{D}_4\odot \textbf{D}_1)\\
-\textbf{D}_3 / (\textbf{D}_1 \odot \textbf{D}_4 - \textbf{D}_3\odot \textbf{D}_2) & \textbf{D}_1 / (\textbf{D}_1\odot \textbf{D}_4 - \textbf{D}_3\odot \textbf{D}_2) 
\end{bmatrix},
\end{equation}
where $\textbf{D}_i\odot \textbf{D}_j$ are element-wise multiplication of diagonal matrices. Since $\textbf{D}_i$ is a $D/2\times D/2$ diagonal matrix, computing $\textbf{D}_i\odot \textbf{D}_j$ can be performed in $\mathcal{O}(D/2)$. Thus, evaluating $\textbf{B}^{-1}_{\theta}{(1,D)}$ can be performed in $\mathcal{O}(D)$.

Since each $\textbf{D}_i$ is a $D/2\times D/2$ diagonal matrix, each of the block in \cref{eq:butterfly_level1_inverse} are also diagonal. Thus, $\textbf{B}^{-1}_{\theta}{(1,D)}$ is a level-one $D$-dimensional butterfly block by definition.
\end{proof}

We now prove \cref{pro:inverse}.
\begin{proof}[Proof of \cref{pro:inverse}]
According to the definition of $\textbf{B}_{\theta}{(i,D)}$, we can write it as
\begin{equation*}
    \textbf{B}_{\theta}(i,D)=
    \begin{bmatrix}
    \textbf{B}_{1}(1, D/2^{i-1}), &\textbf{0} &... &\textbf{0}\\
    \textbf{0} &\textbf{B}_{2}(1, D/2^{i-1}) &... &\textbf{0}\\
    &...\\
    \textbf{0} &\textbf{0} &... &\textbf{B}_{2^{i-1}}(1, D/2^{i-1})
    \end{bmatrix}
\end{equation*}
where $\textbf{B}_{j}(1, D/2^{i-1})$ is a level-one $D/2^{i-1}$-dimensional (invertible) butterfly factor
and $\textbf{0}$ is the $D/2^{i-1}\times D/2^{i-1}$ zero matrix. 
Using the properties of diagonal block matrices, it is easy to check
\begin{equation*}
    \textbf{B}^{-1}_{\theta}(i,D)=
    \begin{bmatrix}
    \textbf{B}^{-1}_{1}(1, D/2^{i-1}), &\textbf{0} &... &\textbf{0}\\
    \textbf{0} &\textbf{B}^{-1}_{2}(1, D/2^{i-1}) &... &\textbf{0}\\
    &...\\
    \textbf{0} &\textbf{0} &... &\textbf{B}^{-1}_{2^{i-1}}(1, D/2^{i-1})
    \end{bmatrix}.
\end{equation*}
According to \cref{lemma_butterfly_inverse}, we have each $\textbf{B}^{-1}_{j}(1, D/2^{i-1})$ is a level-one $D/2^{i-1}$-dimensional butterfly factor that can be computed in $\mathcal{O}(D/2^{i-1})$ given $\textbf{B}_{j}(1, D/2^{i-1})$. Thus, $\textbf{B}^{-1}_{\theta}(i,D)$ is a level-$i$ $D$-dimensional butterfly factor by definition. It can be computed in $2^{i-1}\mathcal{O}(D/2^{i-1})=\mathcal{O}(D)$ given $\textbf{B}_{\theta}(i,D)$.
Since $\textbf{B}^{-1}_{\theta}(i,D)$ is a $D\times D$ sparse matrix with only two non-zero entries each row,
 the map $b_i^{-1}: \mbf{z}\to \textbf{B}_{\theta}^{-1}{(i,D)}\mbf{z}$ (\ie, a matrix vector multiplication) can be computed in $2\mathcal{O}(D)=\mathcal{O}(D)$ given $\textbf{B}_{\theta}^{-1}{(i,D)}$.
\end{proof}

\permutation*
\begin{proof}
According to Theorem 2. in \cite{dao2020kaleidoscope}, any $D\times D$ permutation matrix $\textbf{P}\in\mathbb{R}^{D\times D}$ (when $D=2^k$) can be represented as 
\begin{equation}
    \textbf{P}= b_1\circ b_2\circ... \circ b_{k-1} \circ b_k \circ \hat{b}_k \circ \hat{b}_{k-1}\circ... \circ \hat{b}_1(\textbf{I}),
\end{equation}
where $b_i:\mbf{x}\to \textbf{B}_{\theta}(i, D)\mbf{x}$ and $\hat{b}_i:\mbf{x}\to \textbf{B}_{\hat\theta}(i, D)\mbf{x}$ are linear layers obtained by multiplying a learnable level-$i$ $D$-dimensional butterfly matrix with the input.
Since $\textbf{P}$ is a permutation matrix, it is non-singular, which implies that each $b_i$, $\bar b_i$ and $\hat{b}_i$ must be invertible. Thus, any $D\times D$ permutation matrix can be represented by an \emph{invertible} butterfly layer.
We also note that \cite{dao2020kaleidoscope}  does not consider settings where
exponentiation of a linear transformation is also invertible (as in our invertible butterfly layers).

\end{proof}

\begin{lemma}[\cite{dao2020kaleidoscope}]
\label{lemma:conv}
Any $D\times D$ ($D=2^{k}$) convolution matrix $\mathbf{C}_{D}$ can be represented as 
\begin{equation}
    \mathbf{C}_{D} = b_{1} \circ b_{2} \circ ... \circ b_{k} \circ \hat{b}_{k-1}  \circ \hat{b}_{k-2}  \circ ... \circ \hat{b}_{1}, 
\end{equation}
where $b_i:\mbf{x}\to \textbf{B}_{\theta}(i, D)\mbf{x}$ and $\hat{b}_i:\mbf{x}\to \textbf{B}_{\hat\theta}(i, D)\mbf{x}$
are butterfly layers with weights in $\mathbb{C}$.
\end{lemma}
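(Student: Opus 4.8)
The plan is to obtain the factorization directly from the classical Cooley--Tukey FFT, following \cite{dao2020kaleidoscope}. We work with the circular (periodic) convolution case, which is the one relevant to the surrounding discussion; a general Toeplitz-type convolution reduces to it by the usual zero-padding embedding. The starting point is the \emph{convolution theorem}: a $D\times D$ circular convolution matrix $\mathbf{C}_D$ is diagonalized by the discrete Fourier transform, i.e. $\mathbf{C}_D = \mathbf{F}_D^{-1}\,\mathbf{\Lambda}\,\mathbf{F}_D$, where $\mathbf{F}_D$ is the $D\times D$ DFT matrix and $\mathbf{\Lambda}=\operatorname{diag}(\mathbf{F}_D\mathbf{c})$ with $\mathbf{c}$ the first column of $\mathbf{C}_D$. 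Hence it suffices to exhibit butterfly-factor factorizations of $\mathbf{F}_D$ and $\mathbf{F}_D^{-1}$ and then fuse them around $\mathbf{\Lambda}$.

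For the DFT I would invoke the radix-2 FFT: for $D=2^k$, a suitable variant of Cooley--Tukey writes $\mathbf{F}_D$ as a product of $k$ sparse stages, each stage being a $D$-dimensional butterfly factor of a distinct level, composed with the bit-reversal permutation $\mathbf{R}$. The nonzero entries of these stages are the $\pm1$'s and the twiddle factors $e^{-2\pi i m/D}$, which is precisely why the weights must lie in $\mathbb{C}$. Taking inverses, and using $\mathbf{R}^{-1}=\mathbf{R}$ together with \cref{pro:inverse} (the inverse of a level-$i$ butterfly factor is again a level-$i$ butterfly factor), $\mathbf{F}_D^{-1}$ factors likewise into $k$ butterfly factors and one copy of $\mathbf{R}$, with the level ordering reversed. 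Choosing the FFT variants for $\mathbf{F}_D$ and $\mathbf{F}_D^{-1}$ so that the two bit-reversal permutations land on the \emph{inner} sides (adjacent to $\mathbf{\Lambda}$) and the level-$k$ stages meet in the middle, substitute both into $\mathbf{C}_D=\mathbf{F}_D^{-1}\mathbf{\Lambda}\mathbf{F}_D$. Since conjugating a diagonal matrix by a permutation yields another diagonal matrix, the two $\mathbf{R}$'s cancel against each other and the central factor remains diagonal.

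What is left is bookkeeping to reach exactly the prescribed shape $b_1\circ\cdots\circ b_k\circ\hat b_{k-1}\circ\cdots\circ\hat b_1$ (a "mountain" of $2k-1$ factors with levels $1,\dots,k-1,k,k-1,\dots,1$). Two elementary facts do this: (i) a diagonal matrix times a level-$i$ butterfly factor is again a level-$i$ butterfly factor, since scaling rows or columns preserves the sparsity pattern; and (ii) the product of two level-$i$ butterfly factors of the same dimension is again a level-$i$ butterfly factor, because both are block-diagonal with the same block partition and, within each block, the product of two level-one factors has all four sub-blocks diagonal (sums of products of diagonal matrices are diagonal). Applying (i) to absorb the central diagonal into the adjacent level-$k$ stage and then (ii) to merge the two now-adjacent level-$k$ stages collapses three central factors into one, turning $2k+1$ stages into $2k-1$; reading off the levels of the remaining factors gives the claimed composition, with the factors from $\mathbf{F}_D^{-1}$ playing the role of the $b_i$ and those from $\mathbf{F}_D$ the role of the $\hat b_i$.

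The main obstacle is precisely this last step: one must pick the right pairing of FFT variants (decimation-in-time versus decimation-in-frequency, and their inverse/conjugate counterparts) so that the butterfly levels are ordered consistently, the bit-reversal permutations sit exactly where they annihilate, and the fused central object is a genuine single butterfly factor rather than merely butterfly-\emph{sparse}. The convolution-theorem and basic FFT steps are standard; the care lies in the index bookkeeping and in verifying facts (i) and (ii).
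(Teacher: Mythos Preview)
Your proposal is correct and, in fact, supplies far more than the paper does: the paper's own ``proof'' of this lemma is a one-line citation, ``See Lemma J.5 in \cite{dao2020kaleidoscope},'' with no argument given. What you have written is precisely a reconstruction of the argument behind that cited lemma --- the convolution theorem $\mathbf{C}_D=\mathbf{F}_D^{-1}\mathbf{\Lambda}\mathbf{F}_D$, the radix-2 Cooley--Tukey factorization of $\mathbf{F}_D$ into butterfly stages plus a bit-reversal, and the bookkeeping to collapse the result to the $2k-1$-factor ``mountain'' shape.

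Your two closure facts are both sound: (i) left- or right-multiplying a level-$i$ butterfly factor by a diagonal matrix only rescales rows or columns, preserving the sparsity pattern; (ii) the product of two level-$i$ factors is block-diagonal with the same block partition, and within each block the product of two level-one factors has all four sub-blocks equal to sums of products of diagonal matrices, hence diagonal. The self-identified ``main obstacle'' --- choosing FFT variants so the bit-reversals land on the inner side of $\mathbf{\Lambda}$ and the two level-$k$ stages are adjacent --- is resolved by taking, e.g., the decimation-in-frequency form $\mathbf{F}_D=R\,B_k\cdots B_1$ and its inverse $\mathbf{F}_D^{-1}=B_1^{-1}\cdots B_k^{-1}\,R$, which places the two copies of $R$ flush against $\mathbf{\Lambda}$; since $R$ is an involution, $R\mathbf{\Lambda}R$ is again diagonal and your merging steps (i)--(ii) finish the job. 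So there is no gap; you have simply filled in what the paper delegated to a reference.
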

\begin{proof}
See Lemma J.5. in \cite{dao2020kaleidoscope}.
\end{proof}

\begin{proposition} %
Given a single channel 2D input  $\mbf{x}\in\mathbb{R}^{W\times H}$, any 2D convolution layer with kernel size $k\times k$, zero padding and output channel one, can be obtained by multiplying a circulant matrix with the input with padding expanded to a vector.
\end{proposition}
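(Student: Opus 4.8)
The plan is to make precise the claim that a 2D convolution with small kernel, zero padding, and single input/output channel is, after flattening the zero-padded input to a vector, exactly a circulant matrix acting on that vector. First I would fix notation: let the input $\mbf{x}\in\mathbb{R}^{W\times H}$ be zero-padded by $\lfloor k/2\rfloor$ (or whatever padding makes the output size match) to obtain $\tilde{\mbf{x}}\in\mathbb{R}^{W'\times H'}$, and let $\tilde{\mbf{x}}^{\mathrm{vec}}\in\mathbb{R}^{W'H'}$ denote its row-major (or column-major) flattening. The 2D convolution with kernel $\{w_{p,q}\}_{p,q}$ produces an output whose $(a,b)$ entry is $\sum_{p,q} w_{p,q}\,\tilde{\mbf{x}}[a+p,\,b+q]$, i.e.\ a fixed linear combination of a shifted window of $\tilde{\mbf{x}}$.

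The key step is to observe that, once everything is expressed in terms of the flattened padded vector, the map $\tilde{\mbf{x}}^{\mathrm{vec}}\mapsto \tilde{\mbf{x}}^{\mathrm{vec}}$ convolved-with-kernel is a \emph{multi-diagonal} linear operator: the $(a,b)\!\to\!(a',b')$ contribution depends only on the offset $(a-a',\,b-b')$, which under the flattening becomes a single integer offset $= (a-a')W' + (b-b')$. Thus the matrix entry $M[\,r,\,s\,]$ depends only on $r-s$, which is exactly the defining property of a circulant matrix (interpreting indices modulo $W'H'$). The zero padding is precisely what lets us close up the boundary so that the ``mod $W'H'$'' wrap-around hits only padded (zero) entries and therefore contributes nothing spurious — this is the one point needing care. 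Concretely I would define the length-$W'H'$ vector $\mbf{c}$ whose nonzero entries are the kernel weights $w_{p,q}$ placed at positions $pW'+q$, and show $M = \mathrm{circ}(\mbf{c})$ by checking that $(M\tilde{\mbf{x}}^{\mathrm{vec}})_r = \sum_{p,q} w_{p,q}\,\tilde{\mbf{x}}^{\mathrm{vec}}_{r - pW' - q \bmod W'H'}$ reproduces the convolution formula above, using that the ``missing'' terms wrapped around by the modulus land on zero-padded coordinates.

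The main obstacle is bookkeeping the index arithmetic so that the boundary behavior works out: one must choose the padding amount and the flattening convention consistently so that a genuine 2D convolution (no circular wrap) on the original image equals a \emph{circular} 1D convolution on the padded flattened vector. I would handle this by taking the padding large enough that the support of the kernel, when slid across the output grid, never reaches beyond the padded region, so every wrap-around term in the circulant formula corresponds to a zero entry of $\tilde{\mbf{x}}^{\mathrm{vec}}$. After that, the proof is just matching coefficients: identify $\mbf{c}$, verify the entrywise identity, and conclude $M$ is circulant. (If one wishes to also extract the output at the un-padded positions, compose $M$ with the coordinate-selection that discards padded rows — but the statement as given only asserts that the padded-to-padded map is circulant, so this final restriction step is optional.)
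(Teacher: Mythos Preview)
Your plan matches the paper's brief sketch: zero-pad, flatten, multiply by a circulant, and then---as the paper explicitly says---discard the output entries at padding positions, so that restriction step is part of the claim rather than optional. One caution on the motivation: the line ``$M[r,s]$ depends only on $r-s$'' is not literally true of the flattened 2D convolution operator (it is block-Toeplitz with Toeplitz blocks, and e.g.\ the 2D offsets $(1,-1)$ and $(0,H'-1)$ both give 1D offset $H'-1$), but your concrete verification---define $M=\mathrm{circ}(\mbf{c})$ with $\mbf{c}$ supported on the kernel offsets and check entrywise that every wrap-around term lands on a zero-padded coordinate---is exactly the right argument and does not rely on that imprecise heuristic.
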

\begin{proof}[Sketch of proof]
Given the input $\mbf{x}$, we apply the zero padding to $\textbf{x}$ and obtain a padded input $\tilde{\mbf{x}}$. We then expand $\tilde{\mbf{x}}$ to a one-dimensional vector. It is easy to show that the 2D convolution can be represented as a circulant matrix multiplied by $\tilde{\mbf{x}}$ with entries (of the output) corresponding to the paddings removed.
\end{proof}

\observationconv*
\begin{proof}[Sketch of proof]
Given an input $\mbf{x}\in \mathbb{R}^{C, W, H}$, an invertible $d\times d$ convolution can be decomposed into two steps: (1) mix the channel information
for each $(w,h)$ pair, $w\in[W]$ and $h\in [H]$,
by performing an invertible $C\times C$ matrix multiplication with a $C$-dimensional vector  $\mbf{x}[:, w,h]$, and (2)
perform single channel $d\times d$ convolution for each of the $C$ inputs $\mbf{x}[i, :, :]$, $i=1,...,C$, independently. As we showed previously, each of the single channel $d\times d$ convolution can be performed by using circulant matrix, vector multiplication. For input whose size after padding is not a power of $2$, we can always pad extra zeros so that the input after padding has size of power of $2$. We can remove the entries corresponding to the paddings in the output to recover the correct output. Now, observe that each $C\times C$ matrix block in the block-wise butterfly matrix exactly corresponds to (1) and by \cref{lemma:conv}, any circulant matrix with size a power of $2$ can be represented using naive butterfly layers. Then the $D/C\times D/C$ block matrix in block-wise butterfly factors (seeing each $C\times C$ as a whole) corresponds to (2).
Thus block-wise invertible butterfly layer with weights in $\mathbb{C}$ can be used to represent a family of the invertible $d\times d$ convolution layer.
\end{proof}

\section{Experiments}
\label{app:exp_details}

\subsection{Training details}

For all experiments, we use Adam optimizer with $\alpha = 0.001$, $\beta_1 = 0.9$, $\beta_2 = 0.999$ for training. We warm up our learning by linearly increasing learning rate from 0 to initial learning rate for 10 iterations, and afterwards exponentially decaying with $\gamma = 0.999997$ per iteration. Training is done on TITAN RTX GPU machines. For some experiments we also employ exponential moving average (EMA) of either the entire model or only the butterfly layers, which we will specify in the next section. 

\subsection{Model architecture}
\label{app:model}

\begin{table}[t]
\centering
\caption{Model architecture for various datasets.}
\resizebox{\textwidth}{!}{
\begin{tabular}{ccccccccc}
\toprule
 & Levels (L) & Steps (K) & Coupling channels & Butterfly levels & Bi-direction? & EMA & Butterfly scheduler $\gamma$ & Butterfly init\\
\midrule
CIFAR-10 & 3 & 32 & 512 & 1 & \xmark & none & N/A & id\\ 
ImageNet-$32\times 32$ & 3 & 32 & 512 & 1 &\xmark & none& N/A & id\\ 
MNIST &  2 & 20 & 512 & 1&\xmark & none& N/A & id\\ 
CIFAR-10,permuted                & 3 & 32 & 512 & 10 & \xmark & separate & 0.99 & rot\\ 
ImageNet-$32\times 32$,permuted   & 3 & 32 & 512 & 10 & \cmark & separate & 0.99 & rot\\ 
MNIST,permuted                    & 2 & 20 & 512 & [9,8,4] &\cmark & separate&0.996 & id\\ 
Galaxy  & 2 & 8 & 512 & 2 & \xmark & separate & 0.996 & id \\
MIMIC-III  & 2 & 2 & 16 & 10 & \xmark & none & N/A & rot \\
\bottomrule
\end{tabular}
}
\label{tab:architecture}
\end{table}

We here define relevant model architecture hyperparameters. The backbone of the our network follows Glow~\cite{kingma2018glow} baseline as visualized in Figure~\ref{fig:butterfly_flow}. Our model uses $L$ levels and $K$ steps, and each butterfly layer is of maximum $M$ levels. We by default choose a list of contiguous integers to parametrize our levels $\{a_i\}_{i=1}^k$, \ie, for a butterfly layer of $M$ levels, $\{a_i\}_{i=1}^k = \{1,2,\dots,M\}$. For our butterfly layers we also implement a version specified in Proposition~\ref{pro:permutation}, which stacks a level-inverted $M$-level butterfly layer on top of a regular butterfly layer. We indicate this version as ``bi-direction" in Table~\ref{tab:architecture}. If it is set, our butterfly layer has $2M$ butterfly factors with selected integer set $\{a_i\}_{i=1}^k = \{1,2,\dots,M,\dots,2,1\}$. For our models, we also implement different types of parameter EMA for training. When EMA is ``none", we use a single Adam optimizer for all parameters. When EMA is indicated as ``all", we employ EMA on all model parameters. When EMA is indicated as ``separate", we employ EMA only for all of our butterfly layers. During training, we use a separate Adam optimizer of the same hyperparameters and exponential decay scheduler of different $\gamma$ for butterfly layers than the Glow backbone, and we optimize the Glow backbone based on the EMA output of butterfly layers. 

We also explore different initialization types for our butterfly layers. If it is ``id", we initialize all our butterfly factors to identity matrix. If it is ``rot", we initialize our butterfly factors such that the 4 diagonal matrices are element-wise orthogonal. That is, if a butterfly factor is $\begin{bmatrix}
    \textbf{D}_1 &\textbf{D}_2 \\
    \textbf{D}_3 &\textbf{D}_4\\
    \end{bmatrix}$ with each sub-matrix being a diagonal matrix, each $2\times 2$ matrix $\begin{bmatrix}\textbf{D}_1[k,k] & \textbf{D}_2[k,k]\\\textbf{D}_3[k,k] & \textbf{D}_4[k,k]\\ \end{bmatrix}$ is initialized to a rotation matrix.

\textbf{Image datasets.} For MNIST datasets specifically, we use logit transform with $\lambda=10^{-6}$ for data preprocessing. For CIFAR-10 and ImageNet-$32\times 32$, we follow~\cite{hoogeboom2019emerging} for data preprocessing.

\textbf{Permuted image datasets.} For ImageNet-$32\times 32$ and CIFAR-10 in particular, we use level-10 butterfly layers and decrease the level by 1 after each Squeeze layer. Since MNIST's image size is $28\times 28 = 784$, it is not divisible by 2 as required by butterfly layers. 
Therefore, we choose to partition the space into a concatenation of $512,256,16$-dimensional spaces where each can be fed into a $9,8,4$-level butterfly factor respectively. Each separate butterfly matrix's level decreases by 1 after each Squeeze layer.

\textbf{Galaxy images.} Model architecture is as shown in Table~\ref{tab:architecture} and we empirically find the using batch size 64 results in better performance.

\textbf{MIMIC-III waveform database.} Since the data has shape (1024, 2), we treat each data point as a 1D image of size 1024 and 2 channels. We then straight-forwardly adapt the Glow backbone for 2D image to process 1D data. For our Emerging and Periodic baselines, we use filter size of 51 since we empirically found that using the default value filter size of 3 fails in learning a reasonable density estimator. For all our model we also use learning rate of 0.0001 because we observed that higher learning rate results in unstable loss curves. For our butterfly matrix, to reduce the number of learnable parameters, we also share parameters for each diagonal matrix, \ie, if a butterfly factor is $\begin{bmatrix}
    \textbf{D}_1 &\textbf{D}_2 \\
    \textbf{D}_3 &\textbf{D}_4\\
    \end{bmatrix}$ with each sub-matrix being a diagonal matrix, the diagonal elements in each $\textbf{D}_k$ are the same. This sharing holds for each primitive diagonal matrix in a butterfly factor. 
    
\textbf{Running time.} 
\label{app:runtime}
We define specifically what is forward pass and inversion pass for each layer tested. In PyTorch's language, by forward pass we mean applying the tested layer \emph{and} computing the log determinant of its Jacobian under \texttt{requires\_grad} mode. By inversion pass we mean applying the inverse of the tested layer under \texttt{no\_grad} mode. 

All testing is done on a TITAN XP GPU. For each tensor tested, e.g. of size $3\times 32\times 32$, we flatten it into a vector before applying butterfly matrix in our CUDA implementation. We use level-10 butterfly layer by default, and for tensors of smaller sizes, we use the maximum possible level to construct our butterfly layer. For example, a tensor of size $1\times 4\times 4$ allows for a level-4 butterfly layer. For tensors of large sizes, e.g. $3\times 128\times 128$, which allows for butterfly layers with more than 10 levels, we stop at level 10 because it is the maximum number we use in all our other experiments. 

Here we also present additional comparisons with $1\times 1$ convolution. 

\begin{figure}[h]
    \centering
    \includegraphics[width=0.65\linewidth]{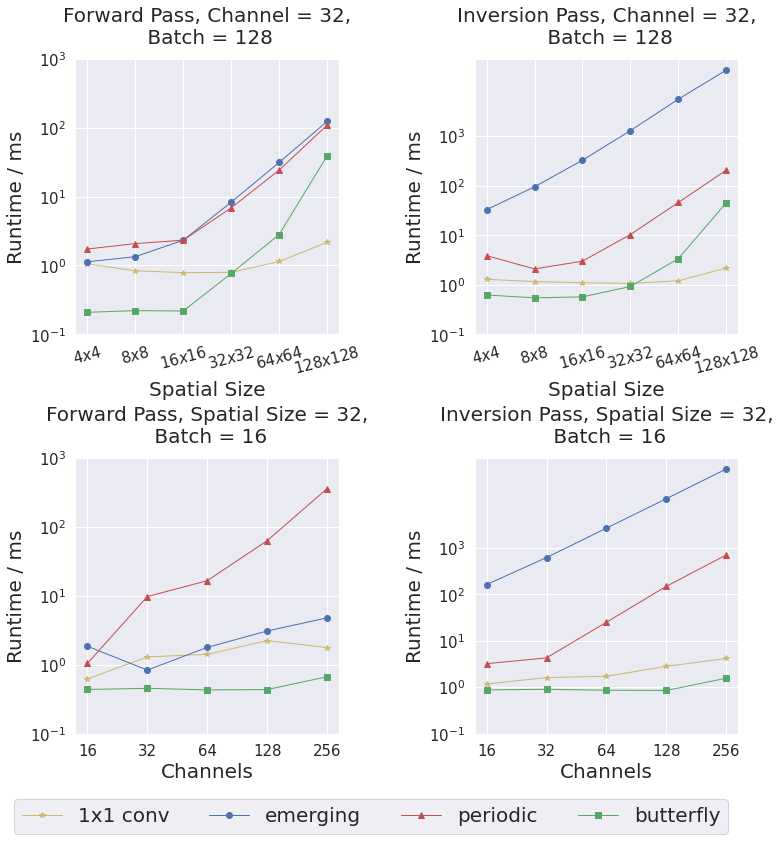}
    \caption{Comparisons with $1\times 1$ convolution added.}
    \label{fig:additional_runtime}
\end{figure}

Shown in \cref{fig:additional_runtime}, $1\times 1$ convolution scales better than our model on large images because the operation can be parallelized across all spatial locations.  We note that our model is faster on smaller images (first row of \cref{fig:additional_runtime}) and performs comparably with $1\times 1$ convolution at spatial size $32\times 32$. Nevertheless, $1\times 1$ convolution does not scale well with increasing channel size primarily because calculation of its determinant is cubic with respect to channel size. For fair comparison, with fix spatial size at $32\times 32$ and vary channel size (second row of \cref{fig:additional_runtime}). We find that our model outperforms all baselines for runtime vs. channel size. 

\section{Datasets}
\label{app:dataset}
\subsection{Permuted image datasets}

For each of CIFAR-10, ImageNet-$32\times 32$, and MNIST, we generate a random permutation matrix and preprocess the images in each dataset using the same dataset-wise permutation matrix. visualizations are done by first generating from the model and permute back using the ground-truth matrix.

\subsection{MIMIC-III waveform database}
\label{app:mimic_details}
MIMIC-III is a large-scale dataset containing approximately 30,000 patients' ICU waveforms. Each patient's record contains a time series of periodic measurements, which is a quasi-continuous recording of the patient's vital signals over their entire stay at the hospital (sometimes days and usually weeks). For this dataset in particular, two feature waveforms are recorded by bedside monitors: Photoplethysmography (PPG) and Ambulatory Blood Pressure (ABP) waveforms. 

Due to the extremely long samples per patient, we built per-patient datasets by cutting each waveform sequence into chunks of length 1024. 
As a concrete example, we can build a dataset of 10,000 data points for a patient with 10.24M sampled intervals. Within this patient's recording, we then have 10,000 data points of dimension (1024, 2) where each dimension corresponds to PPG and ABP features in time. The data points are additionally normalized to $[-1, 1]$ before training. Patient 1, 2, 3 corresponds to patient ID 3000063, 3000393, 3000397, respectively. More details about the dataset is available at \url{https://physionet.org/content/mimic3wdb/1.0/}. We also preprocess our data according to~\cite{s19153420} with this Github page \url{https://github.com/gslapnicar/bp-estimation-mimic3}, which performs necessary filtering for noise removal and anomaly removal.

\end{document}